\documentclass[lettersize,journal]{IEEEtran}
\usepackage{amsmath,amsfonts}
\usepackage{algorithmicx}
\usepackage{algorithm}
\usepackage{array}
\usepackage[caption=false,font=normalsize,labelfont=sf,textfont=sf]{subfig}
\usepackage{textcomp}
\usepackage{stfloats}
\usepackage{url}
\usepackage{verbatim}
\usepackage{graphicx}

\usepackage{cite}
\usepackage{xcolor}
\usepackage[colorlinks,bookmarksopen,bookmarksnumbered,citecolor=red,urlcolor=red]{hyperref}
\usepackage{amsthm}
\usepackage{times}
\usepackage{bm}
\usepackage{multirow}
\usepackage{galois}
\usepackage{mathrsfs}
\usepackage{amssymb}
\usepackage{algpseudocode}

\usepackage{makecell} 

\usepackage{threeparttable}

\hyphenation{op-tical net-works semi-conduc-tor IEEE-Xplore}

\newtheorem{definition}{Definition}

\newtheorem{problem}[definition]{Problem}
\newtheorem{remark}[definition]{Remark}
\newtheorem{theorem}[definition]{Theorem}
\newtheorem{lemma}[definition]{Lemma}
\newtheorem{corollary}[definition]{Corollary}

\floatname{algorithm}{Algorithm}

\begin{document}

\title{Efficient Distance-Optimal Tethered Path Planning in Planar Environments: The Workspace Convexity}

\author{Tong Yang, Rong Xiong, and Yue Wang$^*$
\thanks{Tong Yang, Rong Xiong, and Yue Wang are with the State Key Laboratory of Industrial Control and Technology, Zhejiang University, P.R. China. Yue Wang is the corresponding author {\tt\small ywang24@zju.edu.cn}.}
}
\maketitle

\begin{abstract}
The main contribution of this paper is the proof of the convexity of the omni-directional tethered robot workspace (namely, the set of all tether-length-admissible robot configurations), as well as a set of distance-optimal tethered path planning algorithms that leverage the workspace convexity. 
The workspace is proven to be topologically a simply-connected subset and geometrically a convex subset of the set of all configurations. 
As a direct result, the tether-length-admissible optimal path between two configurations is proven exactly the untethered collision-free locally shortest path in the homotopy specified by the concatenation of the tether curve of the given configurations, which can be simply constructed by performing an untethered path shortening process in the 2D environment instead of a path searching process in the pre-calculated workspace. 

The convexity is an intrinsic property to the tethered robot kinematics, thus has universal impacts on all high-level distance-optimal tethered path planning tasks: The most time-consuming workspace pre-calculation (WP) process is replaced with a goal configuration pre-calculation (GCP) process, and the homotopy-aware path searching process is replaced with untethered path shortening processes. 
Motivated by the workspace convexity, efficient algorithms to solve the following problems are naturally proposed: (a) The optimal tethered reconfiguration (TR) planning problem is solved by a locally untethered path shortening (UPS) process, (b) The classic optimal tethered path (TP) planning problem (from a starting configuration to a goal location whereby the target tether state is not assigned) is solved by a GCP process and $n$ UPS processes, where $n$ is the number of tether-length-admissible configurations that visit the goal location, (c) The optimal tethered motion to visit a sequence of multiple goal locations, referred to as the optimal tethered multi-goal visiting (TMV) task, is solved by a GCP process, $\sum_{i = 1}^{N-1} n_in_{i+1}$ UPS processes, and a numerical dynamic programming (DP) process, where $N$ is the number of goals and $n_i$ is the number of configurations to visit the $i$-th goal, and 
(d) The optimal tethered travelling salesman problem (TTSP) is solved by a GCP process, $(\sum_{i=1}^{N} n_i)^2 - \sum_{i=1}^Nn_i^2$ UPS processes, and a numerical generalised travelling salesman problem (GTSP) process. 
Challenging simulation scenarios are presented to validate the computational advantage of the proposed contribution. 
\end{abstract}

\begin{IEEEkeywords}
Optimal Tethered Path Planning, Multi-Goal Planning, Convexity of Tether Robot Workspace
\end{IEEEkeywords}

\section{Introduction}

\IEEEPARstart{T}{ethered} robots have natural advantages in maintaining stable power supply and communication links, which make them suitable to perform energy-consuming tasks and work in wireless communication-denied environments, for instance the coverage tasks~\cite{Shnaps2014Online}~\cite{Mechsy2017Novel}~\cite{Sharma20192}, disaster recovery missions~\cite{Pratt2008Use}, mountain climbing tasks~\cite{Abad2011Motion}~\cite{Tanner2013Online}, and exploration tasks~\cite{Shapovalov2020Exploration}. 
Yet the topological constraints introduced by the flexible tether have caused significant challenges in planning: 
The tethered robot may visit the same location with different tether states, regarded as different \textit{configurations}.
This indicates that the tethered path planning is essentially performed in the high dimensional \textit{workspace}, namely the set of all tether-length-admissible configurations, instead of the 2D collision-free environment. 

Existing algorithms indexed the configurations by the combination of the robot's location and the topological invariants of the tether curve such as $H$-signature~\cite{Bhattacharya2012Topological} and winding number~\cite{Pokorny2016High}. 
The workspace needs to be pre-calculated into a \textit{homotopy-augmented graph} before classic optimal graph searching algorithms such as A*~\cite{hart1968formal} were performed in the workspace~\cite{Kim2014Path}~\cite{Mccammon2017Planning}, wherein all nodes are tether-length-admissible robot configurations. 
The reader is referred to Section~\ref{section_related_works} for a detailed survey. 
Although the topological representation of the workspace is mathematically elegant, exhaustive pre-calculation of the workspace could be inefficient and consume a large amount of computational time, because the number of configurations in the workspace is much larger than the number of collision-free locations in the 2D environment, and a large proportion of the configurations in the workspace are not used by the path planner. 
~\cite{Kim2015Path} has also noticed the troublesome inefficiency of the workspace pre-calculation and therefore proposed the multi-heuristic A* algorithm, aiming to avoid pre-calculating the whole workspace, where however the resultant motion was only sub-optimal.

In this paper, we propose the first optimal tethered path planner without workspace pre-calculation where the key contributor is the workspace convexity, which is essentially the proof of no requirement to check the tether-length-admissibility of the optimal motion. 
A formal definition of the convexity will be presented in \textbf{Section~\ref{section_convexity}}. 
In other words, it is asserted and proven that if two configurations are in the workspace (i.e., admissible), then the commonplace (i.e., untethered) optimal motion between them must also be in the workspace (i.e., admissible). 
As such, any tethered optimal path must be the untethered locally optimal path whose homotopy is specified by the concatenation of the tether curve of the starting configuration and the goal configuration. 
This means that the workspace convexity has effectively transformed the optimal \textit{tethered reconfiguration} (TR) task between two configurations into a locally \textit{untethered path shortening} (UPS) task. 
Naturally generalising this idea to solve the classic \textit{optimal tethered planning} problem~\cite{Kim2014Path} whose goal configuration is undetermined, workspace pre-calculation is neither required, which is replaced by a \textit{goal configuration pre-calculation} (GCP) module which only collects the admissible configurations to visit the goal location. 
After admissible goal configurations are collected, all the optimal tethered reconfiguration motions can be easily constructed by executing UPS one time for each goal configuration, whereby the optimal tethered motion is the shortest reconfiguration motion.
Since convexity is a fundamental property of the tethered robot workspace, the above discussion could be further generalised to efficiently solve the following higher-level distance-optimal tethered path planning tasks which have not been solved before: 
\begin{enumerate}
\item Given a sequence of goals to be visited, the optimal tethered multi-goal visiting (TMV) task is solved. 
The WP module is replaced with the GCP module, and $\sum_{i=1}^{N-1}n_in_{i+1}$ UPS processes are then performed to construct all possible transitions to visit consecutive goals, where $N$ is the number of goals and $n_i$ is the number of configurations to visit the $i$-th goal. 
Finally, a numerical dynamic programming (DP) process is utilised to select the optimal TMV solution. 
\item Given a set of goal locations, the optimal tethered travelling salesman problem (TTSP), finding the shortest collision-free and tether-length-admissible cyclic path that visits all goals without a specific order, is solved. 
The solver consists of a GCP process, $(\sum_{i=1}^{N} n_i)^2 - \sum_{i=1}^Nn_i^2$ UPS processes, and a numerical generalised travelling salesman problem (GTSP)~\cite{Lien1993Transformation} process. 
\end{enumerate}

The following sections are organised as follows: 
Section~\ref{section_related_works} surveys existing literature. 
Section~\ref{section_problem_modelling} formally defines the terminologies and the problems to be solved in this paper. 
Section~\ref{section_simply_connected} presents the simply-connectedness of the workspace which clarifies the relation between local optimality and global optimality stated in this paper. 
Formal definition and proof of the workspace convexity are provided in Section~\ref{section_convexity}. 
Leveraging the convexity of the workspace, effective solutions to the optimal TR problem, optimal TP problem, optimal TMV problem, and optimal TTSP problem are presented in Section~\ref{section_solution}. 
Experiments are collected in Section~\ref{section_experiment}, with final concluding remarks gathered in Section~\ref{section_conclusion}. 

\section{Related Works}\label{section_related_works}

There have been extensive investigations into the distance-optimal tethered path planning problems. 
We survey them problem-wise below. 

\subsection{Tethered Path Planning in Obstacle-free Environments}
Early reports~\cite{Hert1995Moving}~\cite{Hert1996Ties}~\cite{Hert1999Motion} on tethered robot planning tasks mainly focused on obstacle-free environments, where each robot is tethered to a point on the boundary of the environment. 
Whilst a robot is moving to its goal location, the tether may be pushed and bent by other robots that contact it. The problem has been formulated in computational geometry, and algorithms with proven polynomial time complexity have been proposed. 

\subsection{Tethered Path Planning in Polygonal Environments}
Recently, some algorithms restricted the scope in polygonal environments~\cite{Teshnizi2014Computing}, where the algorithmic complexity can be calculated~\cite{Xavier1999Shortest}~\cite{Brass2015Shortest} as a polynomial of the number of straight segments in the starting configuration and the number of obstacle vertices. 
Besides, the tethered path planning problem has also been discussed in the visibility graph of a polygonal environment~\cite{Salzman2015Optimal}, where the algorithm is more efficient than grid-based approaches because the visibility graph~\cite{Ghosh1991Output} has effectively abstracted the topological information of the environment. 

\subsection{Tethered Path Planning in Arbitrary Planar Environments}
Another category of methods is to find optimal tethered robot motion in an arbitrary planar environment, represented in grid maps in particular. 
Research works have focused on exploring the workspace~\cite{Igarashi2010Homotopic}~\cite{Yershov2013Continuous}~\cite{Shnaps2014Online}, either in a resolutionally complete manner~\cite{Bhattacharya2010Search} or an asymptotically optimal manner~\cite{Pokorny2015Data}. 
~\cite{Kim2014Path} combined the homotopy-based topological approach and the graph search-based techniques to solve the optimal tethered path planning problem. 
The tethered robot workspace (represented in a homotopy augmented graph) was firstly calculated to collect all the tether-length-admissible robot configurations as well as their adjacency relations. 
Topological invariants such as \textit{$H$-signature}~\cite{Bhattacharya2012Topological} and winding number~\cite{Pokorny2016High} were adopted to differentiate the configurations with the same robot location. 
Classic graph searching-based planners such as A*~\cite{hart1968formal} and Dijkstra~\cite{Dijkstra1959Note} were then applied to the graph to generate the optimal path. 
Although the workspace pre-calculation is only executed once, it is still the most computationally inefficient process. 
To avoid pre-calculating the workspace,~\cite{Kim2015Path} proposed a multi-heuristic A* search technique which developed a dynamically generated set of heuristic functions to reduce the computational load. 
However, the solution of multi-heuristic A* had to be near optimal but not globally optimal. 

Besides discretising the environment as grids, sampling-based descriptions of the workspace may also be  legit~\cite{Wang2018Topological}. 
However, the solutions are still a two-stage process: A sampling-based roadmap, equivalent to the homotopy-augmented graph, was firstly constructed or obtained from data~\cite{Pokorny2015Data}, then a lifted graph-based path searching algorithm was applied in the roadmap to find the optimal path. 

As per the discussions above, concentrations have been mainly paid to the topological structure of the workspace, which is close to the algebraic topology~\cite{Rotman2013Introduction} in mathematics. 
It is worthwhile noting that an in-depth analysis of the geometric structure of the workspace, e.g., convexity, is of the same importance, which however is not a proposition that can be derived from a topological perspective. 
This is to be presented in this paper.

\subsection{Tethered Travelling Salesman Problem}
We also notice that~\cite{Mccammon2017Planning} proposed a non-entangled solution to the travelling salesman problem (TSP) of a tethered underwater vehicle. 
The algorithm consisted of a homotopy-aware path planning process and a mixed integer programming method for constructing a near-optimal TSP solution. 
The path planning algorithm adopted in~\cite{Mccammon2017Planning} has followed a similar vein as above~\cite{Kim2014Path}: A homotopy-aware probabilistic roadmap was pre-calculated~\cite{Kavraki1996Probabilistic}, and A*~\cite{hart1968formal} was utilised to find the optimal path. 
In particular, it has acknowledged~\cite{Mccammon2017Planning} that the algorithmic complexity to find all the topology-distinct robot motions between any two goals was too high, thus only a subset of homotopy classes were selected which led to non-optimality. 
In contrast, we prove in this paper that the reconfiguration motion can be simply obtained by a UPS process instead of a path searching process, hence is computationally affordable. 
This will eventually transform the optimal tethered TSP problem into a generalised travelling salesman problem (GTSP) in a reasonable computational time.  

\section{Problem Modelling}\label{section_problem_modelling}
In this section, we formally define the terminologies and the problems in this paper. 
We make the following assumptions: 
\begin{enumerate}
\item The environment is 2D. 
\item The robot is circular and omnidirectional. 
\item The tether is always taut. 
\item The tether-robot contacting point is the robot's centre. 
As such the endpoint of the tether is the robot's location.
\item The thickness of the tether is neglected. 
\end{enumerate}

\subsection{Terminologies and Notations}
We denote the 2D environment as $\mathcal{M}$, and the obstacle-free part of the environment as $\mathcal{M}_{\rm free}$. 
Given the radius of the circular robot, the collision-free part of the environment is denoted as $\mathcal{C}$. 
The base point is $p_0$ with $p_0\in \mathcal{C}$. 
The maximum length of the robot's tether is $L$. 

\begin{definition}
(Curve) A curve $\alpha$ is a one-parameter embedding of the unit interval into the environment, 
\begin{equation}
\alpha : [0, 1]\rightarrow \mathcal{M}_{\rm free},\ s\mapsto \alpha(s)
\end{equation}
The backtracking of curve $\alpha$ is defined as 
\begin{equation}
\alpha^{-1}:[0, 1]\rightarrow \mathcal{M}_{\rm free},\ s\mapsto \alpha(1-s)
\end{equation}
The concatenation of two curves is denoted as $\alpha_1*\alpha_2$, 
\begin{equation}
(\alpha_1*\alpha_2)(s) = \left\{
\begin{aligned}
&\alpha_1(2s),\ &0\leq s\leq \frac{1}{2}\\
&\alpha_2(2s-1),\ &\frac{1}{2}\leq s\leq 1
\end{aligned}
\right.
\end{equation}
For clarity, a curve is always represented by a Greek alphabet in this paper.
If $\alpha$ lies totally in $\mathcal{C}$, we write $\alpha\subset \mathcal{C}$. 
The length of $\alpha$ is denoted by $g(\alpha)$. 
\end{definition}

\begin{definition}
(Homotopy) Given two curves $\alpha_1$ and $\alpha_2$ with $\alpha_1(0) = \alpha_2(0)\in \mathcal{M}_{\rm free}$ and $\alpha_1(1) = \alpha_2(1)\in \mathcal{M}_{\rm free}$, they are homotopic, denoted by $\alpha_1\simeq \alpha_2$, if one can be continuously deformed into the other one in $\mathcal{M}_{\rm free}$ with endpoints fixed. 
\end{definition}

\begin{definition}
(Local Shortening of Curves) Given a curve $\alpha$, the locally shortening of $\alpha$ in $\mathcal{M}_{\rm free}$ (or $\mathcal{C}$) is denoted as $S_\alpha^{\mathcal{M}_{\rm free}}$ (or $S_\alpha^{\mathcal{C}}$).
\end{definition}

\begin{definition}
(Configuration) The configuration $c$ of a tethered robot consists of the robot's location $p$ and the state of its tether represented by a curve $\alpha$
\begin{equation}
\begin{aligned}
c&\triangleq (p, \alpha)\\
s.t.~~~~ \alpha(0) &= p_0,\ \alpha(1) = p,\ \alpha\subset \mathcal{M}_{\rm free},\ p\in \mathcal{C}
\end{aligned}
\end{equation}
The orientation of $\alpha$ is from $p_0$ to $p$. 
In particularly, we denote the home configuration as $(p_0, \alpha_0)$, where $\alpha_0$ is the constant mapping $[0, 1]\mapsto p_0$. 
\end{definition}

\begin{remark}
(Admissible)
Note that the definition of the configuration itself does not guarantee the feasibility of tether length, and in this paper we may use some configurations whose tether violates the maximum tether length constraint. 
For easy reference, if the length of the tether of a configuration is less than the maximum allowable tether length $L$, we say the configuration is (tether-length-)admissible. 
Otherwise, we sometimes say it is a \textit{constraint-violated configurations} to emphasise that it violates the maximum tether length constraint. 
\end{remark}

\begin{remark}\label{rem:path_along_a_tether}
(Homotopic Collision-free Path along a Valid Tether) 
Generally, a curve $\alpha$ may not be in $\mathcal{C}$, then $S_\alpha^{\mathcal{C}}$ is undefined.  
However, if $\alpha$ represents a tether state, then the robot must be able to travel along a collision-free path homotopic to the tether. 
So the locally shortest collision-free path homotopic to a given tether curve $\alpha$ always exists and is well-defined, denoted as $S^{\mathcal{C}}_{\alpha}$. 
\end{remark}

\begin{definition}
(Configurations Induced by Motion) Given a configuration $c_1 = (p_1, \alpha_1)$ and a collision-free path $\beta$, $\beta(0) = p_1$. 
Whilst the robot is moving along $\beta\subset \mathcal{C}$, the cable simultaneously deforms and keeps taut. 
Then the interim configurations induced by $\beta$ (from $c_1$) are well-defined as 
\begin{equation}
(\beta(s), S^{\mathcal{M}_{\rm free}}_{\alpha_1*\beta([0, s])}), s\in [0, 1]
\end{equation}
If all induced configurations are admissible,  we say the motion is (tether-length-)admissible. 
\end{definition}

\begin{definition}
(Workspace) The workspace is the set of all admissible configurations. 
It is a subset of the set of all configurations without considering the maximum tether length constraint. 
\end{definition}

\begin{remark}\label{rem:finite_number_of_configurations}
Assuming that the tether is taut, for each reachable location, there are only a finite number of admissible configurations to visit it. 
\end{remark}

\subsection{Problem Statement}
The solution to the following path planning problems for a tethered omni-directional robot will be presented in this paper. 

\begin{problem}\label{prob:TR}
(Optimal Tethered Reconfiguration, TR) Given the starting configuration $c_1 = (p_1, \alpha_1)$, $g(\alpha_1)\leq L$, and the goal configuration $c_2 = (p_2, \alpha_2)$, $g(\alpha_2)\leq L$, the tethered reconfiguration motion is a collision-free path $\beta$ from $p_1$ to $p_2$, such that all induced configurations satisfy the maximum tether length constraint, and that $c_2$ is the induced target configuration, 
\begin{equation}
\begin{aligned}
g(S_{\alpha_1*\beta([0, s])}^{\mathcal{M}_{\rm free}}) &\leq L, \forall s\in [0, 1]\\
\alpha_2 &\simeq \alpha_1*\beta,\ \beta\subset \mathcal{C}
\end{aligned}
\end{equation}
The optimal tethered reconfiguration solution is the shortest path among all reconfiguration motions.
\end{problem}

\begin{problem}\label{prob:TP}
(Classic Optimal Tethered Planning, TP) Given the starting configuration $c_1 = (p_1, \alpha_1)$ and the goal location $p_2$, we denote the set of admissible configurations visiting $p_2$ as (say there are $n$ ones, where $n$ is finite by \textbf{Remark~\ref{rem:finite_number_of_configurations}}): 
\begin{equation}
\{c_{21} \triangleq (p_2, \alpha_{21}), \cdots, c_{2n} \triangleq (p_2, \alpha_{2n})\}
\end{equation}
Then, there may exist up to (here we have not asserted that the reconfiguration motion to any of the goal configurations exists) $n$ reconfiguration motions from $c_1$ to each possible target configuration $\{c_{21}, \cdots, c_{2n}\}$, denoted by $\{\beta_1, \cdots, \beta_n\}$. 
Then the optimal tethered path solution is the shortest one among them, 
\begin{equation}
\beta^* = \mathop{\rm argmin}\limits_{\beta_i\in \{\beta_i\}_{i=1}^n} g(\beta)
\end{equation}
\end{problem}
Note that the statement of the optimal TP problem is mathematically equivalent to previous formulations such as~\cite{Kim2014Path}. 
Here it is re-formulated as above for the easy usage of an efficient optimal TR solution. 

The following problems have not been optimally solved because of their high complexity. 
These will also be solved in this paper. 

\begin{problem}\label{prob:TMV}
(Optimal Tethered Multi-Goal Visiting, TMV) Given the starting configuration as the home configuration and a sequence of goal locations $p_1, \cdots, p_N$, we denote the set of admissible configurations to visit $p_i$ as (say there are $n_i$ ones, where $n_i$ are finite by \textbf{Remark~\ref{rem:finite_number_of_configurations}}):
\begin{equation}
C_i\triangleq\{c_{ij_i} \triangleq (p_i, \alpha_{ij_i})\},\ j_i = 1, \cdots, n_i,\ i = 1, \cdots, N
\end{equation}
To emulate a practical cyclic target monitoring task, we formally let the $0$-th and $(N+1)$-th goal be the base point itself, with the home configuration being the only available configuration, i.e., 
\begin{equation}\label{eqn:0andn+1}
\begin{aligned}
&n_0 = n_{N+1} = 1\\
&C_0 = \{c_{0j_0}\} \triangleq \{(p_0, \alpha_{j_0})\} = \{(p_0, \alpha_{0})\}\\
&C_{N+1} = \{c_{(N+1)j_{N+1}}\} \triangleq \{(p_{N+1}, \alpha_{j_{N+1}})\} = \{(p_0, \alpha_{0})\}
\end{aligned}
\end{equation}
A valid tethered multi-goal visiting path $\beta$ is the concatenation of reconfiguration motions
\begin{equation}
\beta = \beta_{01}*\beta_{12}*\cdots*\beta_{(N-1)N}*\beta_{N(N+1)}
\end{equation}
where 
\begin{equation}
\beta_{ij}(0) =p_i,\ \beta_{ij}(1) =p_j
\end{equation}
The optimal TMV solution is the shortest TMV path.
\end{problem}
Obviously, the path segments $\beta_{ij}$ in \textbf{Problem~\ref{prob:TMV}} are all optimal TR solutions, or else a better TMV solution can be simply constructed by further optimising the non-optimal reconfiguration motion segments.

Finally, the optimal \textit{tethered travelling salesman problem} (TTSP) is formulated as 

\begin{problem}\label{prob:TTSP}
(Optimal Tethered Travelling Salesman Problem, TTSP) 
Let $S^N$ represent the set of all permutations of $N$ objects, i.e., 
\begin{equation}
\sigma: \{1, \cdots, N\}\rightarrow \{1, \cdots, N\},\ \sigma(i) \neq \sigma(j) \mbox{ when } i\neq j
\end{equation}
Given the same definitions as in \textbf{Problem~\ref{prob:TMV}}, each permutation indicates a visiting order of the $N$ goals, 
\begin{equation}
\sigma\mapsto [p_{\sigma(1)}, \cdots, p_{\sigma(N)}]
\end{equation}
which admits an optimal TMV solution $\beta^*_\sigma$. 
Then the optimal tethered travelling salesman problem is to find the shortest travelling path
\begin{equation}
\beta^{**} = \mathop{\rm argmin}\limits_{\beta^*_{\sigma}\in \{\beta^*_{\sigma}, \sigma\in S^N\}}g(\beta^*_{\sigma})
\end{equation}
\end{problem}

Notations used in this paper are summarised in Table.~\ref{table:symbols}.

\begin{table}[t]
\small\sf\centering
\caption{List of Notations}\label{table:symbols}
\begin{tabular}{>{\bfseries}ll}
\hline
\normalfont{Symbols}   &    Meaning   \\
\hline
$\mathcal{M}$ & The 2D environment\\
$\mathcal{M}_{\rm free}$ & Obstacle-free environment\\
$\mathcal{C}$ & Collision-free environment\\
$c$ & A configuration which contains \\
 &  \qquad robot's location and the tether shape\\
$g(\cdot)$ & Length of a curve\\
$L$ & Maximum tether length\\
$N$ & Number of goals\\
$n_i, i = 1, \cdots, N$ & The number of tether-length-admissible \\
& \qquad configurations to visit the $i$-th goal\\
$p_0$ & Base point\\
$p_1, \cdots, p_N$ & Goal locations\\
$\alpha, \beta, \gamma$ & A curve\\
$S_{\alpha}^*$ & The local shortening of curve $\alpha$\\
&\qquad in the $*$ environment\\
\hline
\end{tabular}
\end{table}

\section{Simply-Connectedness of the Tethered Robot Workspace}\label{section_simply_connected}
Different from the Euclidean scenario where any convex set must also be simply-connected, the simply-connectedness of the tethered robot workspace is not a direct corollary of its convexity, because the connection between two configurations in the workspace is not the straight line segment. 
In this section, as an easy remark, we prove that the workspace is indeed simply-connected. 
\begin{remark}\label{rem:simply-connected}
(Simply-Connectedness) For any two configurations $(p_1, \alpha_1)$ and $(p_2, \alpha_2)$, all the reconfiguration motions are homotopic.
\end{remark}
\begin{proof}
Proof by contradiction. Assume there exist two reconfiguration motions which are non-homotopic, denoted as $\beta_1$ and $\beta_2$. 
Since the path concatenation preserves homotopy, we have
\begin{equation}
\beta_1\not\simeq \beta_2\Rightarrow \alpha_1^{-1}*\beta_1\not\simeq \alpha_1^{-1}*\beta_2
\end{equation}
The local path shortening process also preserves homotopy, thus
\begin{equation}\label{eqn:homotopic}
\alpha_1^{-1}*\beta_1\not\simeq \alpha_1^{-1}*\beta_2\Rightarrow S_{\alpha_1^{-1}*\beta_1}^{\mathcal{M}_{\rm free}}\not\simeq S_{\alpha_1^{-1}*\beta_2}^{\mathcal{M}_{\rm free}}
\end{equation}
However, by assumption both $\beta_1$ and $\beta_2$ are reconfiguration motions, i.e., 
\begin{equation}\label{eqn:nonhomotopic}
S_{\alpha_1^{-1}*\beta_1}^{\mathcal{M}_{\rm free}} = \alpha_2 = S_{\alpha_1^{-1}*\beta_2}^{\mathcal{M}_{\rm free}}\Rightarrow S_{\alpha_1^{-1}*\beta_1}^{\mathcal{M}_{\rm free}}\simeq S_{\alpha_1^{-1}*\beta_2}^{\mathcal{M}_{\rm free}}
\end{equation}
Eqn.~(\ref{eqn:homotopic}) and Eqn.~(\ref{eqn:nonhomotopic}) contradicts. 
\end{proof}
Based on \textbf{Remark~\ref{rem:simply-connected}}, as long as we find one valid reconfiguration motion for the given two configurations, the optimal reconfiguration motion must be homotopic to it. 
In this paper, for rigorousness and intuitive reference, we still call the formal variable ``$S^\mathcal{C}_*$" as a \textit{locally} shortest curve, but the reader should be noted that, as the motion between two configurations, it is also the globally shortest one. 

\section{Convexity of The Tethered Robot Workspace}\label{section_convexity}

Before we prove the convexity, we identify what it is. 
Recall that for a convex set of points embedded in the Euclidean space, the convexity means that given any two points in the set, all the points in the shortest (straight) curve connecting the two points in the Euclidean space also lie in the set. 
For tethered robot planning task, the workspace is the set of all admissible configurations, whose ambient space is the set of all configurations with no tether length constraint. 
Given two configurations are admissible, regarded as points in the workspace, the workspace convexity asserts that the optimal sequence of configurations that connect the given two configurations in the ambient space (where the maximum tether length constraint is ignored) must also be in the workspace.

Concretely, let $(p_1, \alpha_1)$ and $(p_2, \alpha_2)$ be two configurations in the workspace. 
We pay attention to a special curve, 
\begin{equation}\label{eqn:beta}
\beta \triangleq S_{\alpha_1^{-1}*\alpha_2}^\mathcal{C}\triangleq S_{(S_{\alpha_1}^\mathcal{C})^{-1}*S_{\alpha_2}^\mathcal{C}}^\mathcal{C}
\end{equation}
It is well-defined, because by \textbf{Remark~\ref{rem:path_along_a_tether}} both $S_{\alpha_1}^\mathcal{C}$ and $S_{\alpha_2}^{\mathcal{C}}$ exist. 
Then $\beta$ is the collision-free path generated by locally shortening their concatenation, $(S_{\alpha_1}^\mathcal{C})^{-1}*S_{\alpha_2}^\mathcal{C}$.  
Since the curve shortening process is not subject to the maximum tether length constraint, literally there might be some induced configurations that are constraint-violated configurations. 
However, we prove that this is impossible, i.e., all the induced configurations are proven admissible. 
From another perspective, the workspace convexity states the relation between tethered optimal paths and untethered locally optimal paths: A tethered shortest path is always the untethered locally shortest path in a certain homotopy class of paths. \footnote{
~\cite{Kim2014Path} has presented similar wordings that the shortest tethered resultant path may be ``the shortest trajectory in some other homotopy class". 
However, the authors did not explicitly state whether they were referring to a trajectory whose local optimality is \textbf{not} subject to the maximum tether length constraint. 
Even if they claim the same as our \textbf{Theorem~\ref{thm:convexity}}, the proposition is non-trivial but there was no formal proof. 
Moreover, they did not use an untethered path shortening module in their algorithm. 
So we regard the similar wordings in~\cite{Kim2014Path} as informal references to ``the shortest one in a set of homotopic admissible motions". } 
Some alternative paths need to be constructed for the proof of convexity, presented next. 

\subsection{Construction of Alternative Path}
Let $(p_1, \alpha_1)$ and $(p_2, \alpha_2)$ be two admissible configurations, for any collision-free path $\gamma$ connect $p_1$ to $p_2$ such that $\alpha_1*\gamma \simeq \alpha_2$, we construct an alternative collision-free path $\tilde{\gamma}$.

\begin{definition}\label{def:tildebeta}
($\tilde{\gamma}$) A collision-free curve $\tilde{\gamma}$ is defined as follows: 
\begin{equation}\label{eqn:tildebeta}
\tilde{\gamma}:[0, 1]\rightarrow \mathcal{C},\ s\mapsto \tilde{\gamma}(s) =  S_{\alpha_1*\gamma([0, s])}^{\mathcal{C}}(\tilde{s})
\end{equation}
where $\tilde{s} = \tilde{s}(s)\in [0, 1]$ depends on $s$ such that: 
\begin{enumerate}
\item If $\gamma$ is admissible, i.e., $g(S^{\mathcal{M}_{\rm free}}_{\alpha_1*\gamma([0, s])})\leq L$, then $\tilde{s} = 1$ (i.e., $\tilde{\gamma}(s) = \gamma(s)$). 
\item If the configuration induced by $\gamma$ is constraint-violated, then $\tilde{s}$ is chosen such that the robot stays in the locally shortest collision-free path, and that the length of the tether of $\tilde{\gamma}$ is exactly $L$, 
\begin{equation}
g(S^{\mathcal{M}_{\rm free}}_{S_{\alpha_1*\beta([0, s])}^\mathcal{C}([0, \tilde{s}])}) = L
\end{equation}
\end{enumerate}
\end{definition}

\begin{remark}
$\tilde{\gamma}$ is a continuous curve, because 
\begin{enumerate}
\item For the path segments where $\gamma$ and $\tilde{\gamma}$ coincide, $\tilde{\gamma}$ is continuous because $\gamma$ is continuous. 
\item For the path segments where $\gamma$ and $\tilde{\gamma}$ are different, $\tilde{\gamma}$ is the boundary of the reachable region of a tethered robot whose tether length is $L$, hence is also continuous. 
\item Finally, at the point where $\gamma$ and $\tilde{\gamma}$ bifurcate, denote the induced configuration as $c$. 
On one side, $\gamma$ and $\tilde{\gamma}$ coincide, thus $c$ is connected to $\tilde{\gamma}$. 
On the other side, the tether length of $c$ is exactly $L$, thus $c$ is also connected to $\tilde{\gamma}$. 
\end{enumerate}
Hence the curve $\tilde{\gamma}$ is indeed continuous. 
\end{remark}

See Fig.~\ref{fig:preserving} for an illustration of $\tilde{\gamma}$ for a given $\gamma$. 

\begin{lemma}\label{lem:tildebeta_is_reconfig}
$\tilde{\gamma}$ is homotopic to $\gamma$. 
\end{lemma}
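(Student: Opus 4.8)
The plan is to exhibit an explicit homotopy $H:[0,1]\times[0,1]\to\mathcal{M}_{\rm free}$ that continuously slides $\tilde{\gamma}$ onto $\gamma$ while keeping the endpoints fixed. The guiding observation is that, for every fixed $s$, both $\tilde{\gamma}(s)$ and $\gamma(s)$ lie on one and the same collision-free curve, namely $S^{\mathcal{C}}_{\alpha_1*\gamma([0,s])}$: by \textbf{Definition~\ref{def:tildebeta}} we have $\tilde{\gamma}(s)=S^{\mathcal{C}}_{\alpha_1*\gamma([0,s])}(\tilde{s})$, whereas $\gamma(s)$ is the terminal point $S^{\mathcal{C}}_{\alpha_1*\gamma([0,s])}(1)$ of that same curve (local shortening preserves endpoints, and the endpoint of $\alpha_1*\gamma([0,s])$ is $\gamma(s)$). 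Hence a homotopy can be produced simply by sliding the curve parameter from $\tilde{s}$ up to $1$. Note first that the two endpoints do coincide, $\tilde{\gamma}(0)=\gamma(0)=p_1$ and $\tilde{\gamma}(1)=\gamma(1)=p_2$, so the claim $\tilde{\gamma}\simeq\gamma$ is well-posed.

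Concretely, I would set
\begin{equation}
H(s,t)\triangleq S^{\mathcal{C}}_{\alpha_1*\gamma([0,s])}\bigl(\tilde{s}(s)+t\,(1-\tilde{s}(s))\bigr),\quad (s,t)\in[0,1]\times[0,1].
\end{equation}
Then $H(s,0)=S^{\mathcal{C}}_{\alpha_1*\gamma([0,s])}(\tilde{s}(s))=\tilde{\gamma}(s)$ and $H(s,1)=S^{\mathcal{C}}_{\alpha_1*\gamma([0,s])}(1)=\gamma(s)$, so $H$ interpolates between the two curves. For the fixed-endpoint requirement I would invoke admissibility of the two given configurations: since $g(\alpha_1)\leq L$, the initial induced tether is admissible and case 1 of \textbf{Definition~\ref{def:tildebeta}} forces $\tilde{s}(0)=1$, giving $H(0,t)=S^{\mathcal{C}}_{\alpha_1}(1)=p_1$ for all $t$; likewise, because $\alpha_1*\gamma\simeq\alpha_2$ with $g(\alpha_2)\leq L$, the terminal induced tether is admissible, so $\tilde{s}(1)=1$ and $H(1,t)=p_2$ for all $t$. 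Finally, every value $H(s,t)$ lies on a collision-free curve $S^{\mathcal{C}}_{\alpha_1*\gamma([0,s])}\subset\mathcal{C}\subseteq\mathcal{M}_{\rm free}$, so the deformation stays in $\mathcal{M}_{\rm free}$ as required by the definition of homotopy.

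The crux will be establishing joint continuity of $H$ in $(s,t)$, which reduces to two facts: the continuity of the scalar function $s\mapsto\tilde{s}(s)$, and the continuity of the family of shortened curves $s\mapsto S^{\mathcal{C}}_{\alpha_1*\gamma([0,s])}$ (jointly in $s$ and the curve parameter). The first is exactly the content of the remark preceding this lemma, where the continuity of $\tilde{\gamma}$ — in particular the matching of the two branches at the bifurcation points between the admissible and constraint-violated regimes — was already argued. The second is the standard taut-string continuity: as the defining curve $\alpha_1*\gamma([0,s])$ is deformed continuously (driven by the continuity of $\gamma$), its locally shortest collision-free representative deforms continuously as well. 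Granting these, $H$ is a composition of continuous maps and hence continuous, yielding $\tilde{\gamma}\simeq\gamma$. I expect this taut-string continuity to be the only delicate point, and it is precisely the place where one must check that the (discrete) homotopy-class transitions of $\alpha_1*\gamma([0,s])$ do not destroy the continuity of its shortest representative.
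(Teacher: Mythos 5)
Your proof is essentially identical to the paper's: the paper establishes the lemma by exhibiting exactly the same homotopy, $F(s,t)=S^{\mathcal{C}}_{\alpha_1*\gamma([0,s])}\bigl(\tilde{s}+(1-\tilde{s})t\bigr)$, and verifying $F(\cdot,0)=\tilde{\gamma}$, $F(\cdot,1)=\gamma$, with all values lying in $\mathcal{C}$. Your extra checks --- the fixed-endpoint conditions $\tilde{s}(0)=\tilde{s}(1)=1$ and the joint-continuity concern resolved via the preceding remark and taut-string continuity --- merely make explicit what the paper leaves implicit.
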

\begin{proof}
This is correct because the following continuous mapping exists: 
\begin{equation}
\begin{aligned}
&F: [0, 1]\times [0, 1]\rightarrow \mathcal{C}\\
&(s, t)\mapsto S^{\mathcal{C}}_{\alpha_1*\gamma([0, s])} (\tilde{s} + (1-\tilde{s})t)
\end{aligned}
\end{equation}
where $\tilde{s}$ depends on $s$, as defined in~Eqn.~(\ref{eqn:tildebeta}). 
It is easy to verify that $F(\cdot, 1) = \gamma$ and $F(\cdot, 0) = \tilde{\gamma}$. 
And the superscript $\mathcal{C}$ indicates that all the points are collision-free, hence also obstacle-free. 
\end{proof}

We need the following easy lemma to prove that $\tilde{\gamma}$ is shorter than $\gamma$. 
\begin{lemma}\label{lem:liu}
The final segment of the tether from the last tether-obstacle contact point to the robot's centre is straight. (By assumption the tether is taut. )
\end{lemma}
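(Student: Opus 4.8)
The plan is to read the tautness hypothesis as the statement that the tether $\alpha$ is a locally shortest curve in $\mathcal{M}_{\rm free}$ within its homotopy class with endpoints fixed --- precisely the property enforced by the operator $S^{\mathcal{M}_{\rm free}}_{\cdot}$ used throughout the paper, since every tether appearing here is the output of a local shortening. The argument will be by contradiction: I will show that if the final segment (from the last contact point $q$ to the robot's centre $p$) failed to be straight, it could be locally shortened while staying obstacle-free and in the same homotopy class, contradicting minimality.

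First I would make the last contact point well-defined. The set $\{s\in[0,1] : \alpha(s)\in\partial\mathcal{M}_{\rm free}\}$ is the preimage of the closed obstacle boundary under the continuous tether map, hence closed; letting $s^*$ be its maximum (which is attained), the point $q\triangleq\alpha(s^*)$ is the last tether--obstacle contact point, and the sub-curve $\tau\triangleq\alpha([s^*,1])$ meets no obstacle on $(s^*,1]$, i.e. its interior lies in the open interior of $\mathcal{M}_{\rm free}$. This is the step that turns ``last contact point'' from an informal notion into a usable fact: everything past $q$ is strictly interior and therefore has obstacle-free room around it.

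The core is then a purely local shortening argument. For any interior parameter $s_0\in(s^*,1)$ the point $x\triangleq\alpha(s_0)$ sits in the open obstacle-free region, so there is a radius $\epsilon>0$ with $B(x,\epsilon)\subset\mathcal{M}_{\rm free}$. If the portion of $\tau$ inside $B(x,\epsilon)$, with its entry and exit points $a,b$, were not the straight chord $\overline{ab}$, then replacing it by $\overline{ab}$ would strictly decrease $g$; this replacement is legitimate because the chord lies in the convex ball $B(x,\epsilon)$ and is therefore obstacle-free, and it is homotopic to the replaced portion relative to $a,b$ since the ball is simply connected. Concatenating with the unchanged remainder of $\alpha$ would produce a curve homotopic to $\alpha$ but strictly shorter, contradicting tautness. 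Hence $\tau$ coincides with a straight segment in a neighbourhood of each interior point; a curve that is locally straight at every point of a connected arc has constant direction, so $\tau$ is a single straight segment, and by continuity the closed segment from $q$ to $p$ is straight.

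I expect the two delicate points to be, first, the insistence that local shortening preserves the homotopy class --- which is what makes the contradiction valid against a tether that is minimal only within its class, not globally --- and second, the local-to-global passage from ``straight in each small ball'' to ``straight as a whole.'' Both are standard for taut strings (whose only bends occur at convex obstacle corners), but they deserve the explicit ball-convexity and constant-direction arguments above rather than being asserted.
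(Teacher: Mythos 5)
Your proof is correct, but it takes a more elementary, self-contained route than the paper's. The paper disposes of the lemma in two lines by invoking the standard structural dichotomy for taut strings in planar environments --- a taut tether is a concatenation of straight segments and wall-following arcs --- and then observing that the robot's centre lies strictly inside the free space (it is the centre of a collision-free footprint of positive radius), so the final piece cannot be wall-following and is therefore straight. You never invoke that classification; instead you derive straightness directly from the variational reading of tautness (local length-minimality in the fixed-endpoint homotopy class, i.e.\ exactly the property of $S^{\mathcal{M}_{\rm free}}_{\cdot}$), via the closed-preimage argument for the last contact point $s^*$, chord-replacement in a convex ball $B(x,\epsilon)\subset\mathcal{M}_{\rm free}$, and the local-to-global constant-direction step. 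This buys rigour the paper's terse proof lacks, at the cost of length; conversely, the paper's proof makes explicit the one fact you use only implicitly, namely that $s^*<1$: your argument needs the sub-arc $\alpha([s^*,1])$ to be nondegenerate, and that is guaranteed precisely by the paper's observation that $\alpha(1)=p\in\mathcal{C}$ is obstacle-free, hence not a contact point. Two small repairs you should make: first, state that $s^*<1$ for this reason (and handle the empty-contact-set case by taking $q=p_0$); second, ``the portion of $\tau$ inside $B(x,\epsilon)$'' need not be a single arc with one entry and one exit, so instead pick, by continuity, a parameter interval $[s_0-\delta,s_0+\delta]$ whose image lies in the ball and replace that sub-arc by its chord --- the straight-line homotopy inside the convex ball then justifies both the homotopy-preservation and the obstacle-freeness claims exactly as you intend. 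Neither issue affects the substance of the argument.
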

\begin{proof}
The robot's centre must be within the robot's footprint, i.e., obstacle-free. 
So the final segment of the tether is not a wall-following curve. 
Then it is straight. 
\end{proof}

\begin{figure}[t]
\centering
\includegraphics[width=0.30\textwidth]{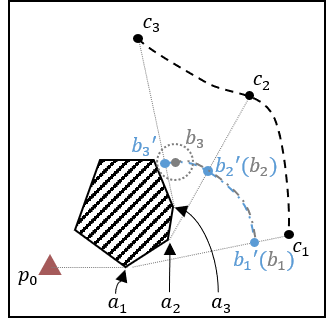}
\caption{Illustration of the paths $\gamma$, $\tilde{\gamma}$, and $\tilde{\gamma}'$. 
The last obstacle-tether contact points are depicted by $a_1, a_2$, and $a_3$. 
$\tilde{\gamma}$ is represented by $b_1$, $b_2$, and $b_3$. 
$\tilde{\gamma}'$ is represented by $b_1'$, $b_2'$, and $b_3'$. 
$\gamma$ is represented by $c_1$, $c_2$, and $c_3$. 
}\label{fig:symbol}
\end{figure}

\begin{figure}[t]
\centering
\includegraphics[width=0.48\textwidth]{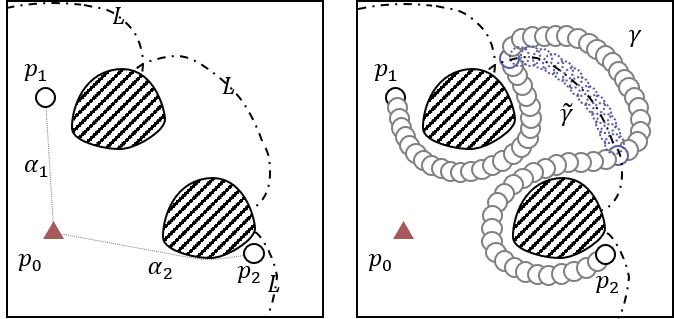}
\caption{
Illustration of a robot motion $\gamma$ and its alternative path $\tilde{\gamma}$. 
Left: 
Illustration of the environment, the base point location, the starting configuration, the goal configuration, and the reachable region given the maximum tether length $L$. 
Right: 
Let a tethered robot motion $\gamma$ violate the maximum tether length constraint, shown as a sequence of grey circles. 
Then we can construct $\tilde{\gamma}$ which is an alternative path of $\gamma$. $\tilde{\gamma}$ will be proven homotopic to $\gamma$, complied with the maximum tether length constraint, and shorter than $\gamma$. 
}\label{fig:preserving}
\end{figure}

Besides, we need to construct another curve for the proof. 
\begin{definition}
($\tilde{\gamma}'$) An obstacle-free curve $\tilde{\gamma}'$ is defined as follows: 
\begin{equation}
\tilde{\gamma}': [0, 1]\rightarrow \mathcal{M}_{\rm free},\ s\mapsto b'\triangleq \tilde{\gamma}'(s) = S^{\mathcal{M}_{\rm free}}_{\alpha_1*\gamma([0, s])}(s_b')
\end{equation}
where $s_b' = s_b'(s)\in [0, 1]$ depends on $s$ such that:
\begin{enumerate}
\item If $\gamma$ is admissible, i.e., $g(S^{\mathcal{M}_{\rm free}}_{\alpha_1*\gamma([0, s])})\leq L$, then $s_b' = 1$ (i.e., $\tilde{\gamma}'(s) = \gamma(s)$). 
\item If the configuration induced by $\gamma$ is constraint-violated, then $s_b'$ is chosen such that the length of the truncated part of the tether from the base point to $\tilde{\gamma}'(s)$ is exactly $L$, i.e., 
\begin{equation}
g(S^{\mathcal{M}_{\rm free}}_{\alpha_1*\gamma([0, s])}([0, s_b'])) = L
\end{equation}
\end{enumerate}
An illustration of the curve $\tilde{\gamma}'$ is provided in Fig.~\ref{fig:symbol}. 
\end{definition}

\begin{lemma}\label{lem:tildegamma_is_shorter_than_tildegamma_prime}
$\tilde{\gamma}$ is shorter than $\tilde{\gamma}'$. 
\end{lemma}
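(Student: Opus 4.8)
The plan is to bound $g(\tilde{\gamma})$ by $g(\tilde{\gamma}')$ through a pointwise geometric comparison, exploiting that on the relevant portions both curves are pinned to tether length exactly $L$. First I would partition $[0,1]$ into the maximal subintervals on which $\gamma$ is admissible and those on which it is constraint-violated. On an admissible subinterval the first cases of the two definitions give $\tilde{\gamma}=\tilde{\gamma}'=\gamma$, so the curves contribute equal length there; moreover, at every endpoint separating an admissible from a violated subinterval the tether length is exactly $L$, so both $\tilde{\gamma}$ and $\tilde{\gamma}'$ agree with $\gamma(s)$ at that point. The problem therefore reduces to proving $g(\tilde{\gamma})\le g(\tilde{\gamma}')$ on each violated subinterval, where the two restrictions share their endpoints.

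On a violated subinterval I would track the last tether--obstacle contact point, an obstacle corner $a=a(s)$, and set $\rho(s)=L-g(\text{tether from }p_0\text{ to }a(s))$. By \textbf{Lemma~\ref{lem:liu}} the final tether segment from $a(s)$ to the robot is straight for both constructions: for $\tilde{\gamma}$ the robot sits on $S^{\mathcal{C}}$ with its own taut tether clipped to total length $L$, while for $\tilde{\gamma}'$ the point sits at arc length $L$ on the straight extension $a(s)\to\gamma(s)$ of the taut tether. Since the tether prefix up to $a(s)$ is common to both, this forces $|\tilde{\gamma}(s)-a(s)|=|\tilde{\gamma}'(s)-a(s)|=\rho(s)$, i.e. the two points lie on the same circle of radius $\rho(s)$ centred at $a(s)$. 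Hence, on each maximal sub-subinterval where the binding corner $a$ is constant, $\tilde{\gamma}$ and $\tilde{\gamma}'$ are two curves on one fixed circle, and the comparison becomes an arc-length comparison on that circle.

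To compare arc lengths, I would express both points in polar angle about $a$ and show that the angular coordinate of $\tilde{\gamma}'$ exceeds that of $\tilde{\gamma}$ by a nonnegative amount that vanishes at the endpoints of the interval: because the physical robot endpoint $\gamma(s)$ lies on the straight segment \emph{beyond} the clip point, $\tilde{\gamma}'$ always ``leads'' $\tilde{\gamma}$ angularly and returns to meet it at the ends. Consequently $\tilde{\gamma}$ traces the monotone arc between the shared endpoints while $\tilde{\gamma}'$ sweeps the same net angle plus an additional back-and-forth excursion, so its length is no smaller. Concatenating across the contact-point transitions (where the curve is continuous by the remark following \textbf{Definition~\ref{def:tildebeta}}, even though $a$, $\rho$, and the governing circle change) and summing over the violated subintervals would yield $g(\tilde{\gamma})\le g(\tilde{\gamma}')$.

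The main obstacle is precisely this final arc-length comparison for an arbitrary $\gamma$: a pointwise speed inequality does \emph{not} hold, since the extra angular term of $\tilde{\gamma}'$ may momentarily decrease, so I cannot simply integrate a local bound. The inequality is genuinely global, relying on the shared endpoints together with the sign-definite angular gap; making it rigorous requires establishing that the robot frontier $\tilde{\gamma}$ sweeps its circle monotonically (the ``shorter arc'' property) while $\gamma$ is allowed to be arbitrary, and carefully handling the junctions where the binding corner switches and the radius $\rho(s)$ jumps. A secondary technical point is confirming that the last-contact structure and the straight final segment are well defined throughout each subinterval, so that $a(s)$ and $\rho(s)$ are unambiguous.
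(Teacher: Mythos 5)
Your reduction fails at its central geometric claim: that $\tilde{\gamma}(s)$ and $\tilde{\gamma}'(s)$ lie on a common circle of radius $\rho(s)$ centred at the binding corner $a(s)$. The point $\tilde{\gamma}(s)$ is the length-$L$ frontier point along the \emph{collision-free} shortened path $S^{\mathcal{C}}_{\alpha_1*\gamma([0,s])}$, and the tether of its induced configuration is re-shortened in $\mathcal{M}_{\rm free}$; since $\tilde{\gamma}(s)$ sits strictly behind $\gamma(s)$ along that path, its taut tether may not reach the corner $a(s)$ at all --- its last contact point can be an earlier corner entirely, so ``the tether prefix up to $a(s)$ is common to both'' is false in general. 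Moreover, $S^{\mathcal{C}}$ wraps obstacles inflated by the robot radius, so even when the binding corners happen to agree, the corner-to-point distances need not equal $\rho(s)$. The circle-about-$a$ picture with \textbf{Lemma~\ref{lem:liu}} (co-linearity of $a$, $b'$, $c$ and equal angular velocity) is the right tool for the \emph{other} inequality, $g(\tilde{\gamma}')\leq g(\gamma)$, which is exactly how \textbf{Theorem~\ref{thm:shorter_length}} proceeds; it does not govern the pair $(\tilde{\gamma},\tilde{\gamma}')$. And even granting your reduction, you explicitly leave the decisive monotone-sweep/shared-endpoint arc comparison unproven, so the argument is incomplete precisely at its core step.

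You also assert that a pointwise speed bound cannot hold, but that is exactly how the paper argues, via a dichotomy you did not isolate: wherever $\tilde{\gamma}'(s)$ is collision-free, the two curves coincide, $\tilde{\gamma}(s)=\tilde{\gamma}'(s)$, so the infinitesimal increments $g(\Delta\tilde{\gamma})$ and $g(\Delta\tilde{\gamma}')$ are equal; wherever $\tilde{\gamma}'(s)$ fails to be collision-free (it is only guaranteed to lie in $\mathcal{M}_{\rm free}$, and may come within the robot radius of an obstacle), $\tilde{\gamma}(s)$ freezes at the last collision-free point, contributing zero length, while $\tilde{\gamma}'(s)$ continues to move, contributing positive length. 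Hence the local inequality $g(\Delta\tilde{\gamma})\leq g(\Delta\tilde{\gamma}')$ holds at every $s$, and integrating over $[0,1]$ yields the lemma with no global arc accounting at all. (The displayed relation in the paper's case 2(b) transposes the two symbols, but the surrounding prose --- $\tilde{\gamma}$ static, $\tilde{\gamma}'$ varying --- fixes the intended direction.) The ``momentarily decreasing angular excursion'' that led you to abandon the local bound is a feature of comparing $\tilde{\gamma}'$ with $\gamma$, not of this lemma.
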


\begin{proof}
We compare the path length of $\tilde{\gamma}$ and $\tilde{\gamma}'$ by comparing the length of their infinitesimal path segment, $g(\Delta\tilde{\gamma}) \triangleq g(\tilde{\gamma}([s, s+\Delta s]))$ and $g(\Delta \tilde{\gamma}') \triangleq g(\tilde{\gamma}'([s, s+\Delta s]))$. 
\begin{enumerate}
\item For the $s$ where the configuration induced by $\gamma$ is admissible,  $\tilde{\gamma}$ and $\tilde{\gamma}'$ coincide, thus $g(\Delta\tilde{\gamma}) = g(\Delta\tilde{\gamma}')$. 
\item Otherwise, 
\begin{enumerate}
\item When $\tilde{\gamma}'(s)$ is collision-free, we still have $\tilde{\gamma}'(s) = \tilde{\gamma}(s)$. Thus $g(\Delta\tilde{\gamma}) = g(\Delta\tilde{\gamma}')$. 
\item Or else, with $s$ increasing, $\tilde{\gamma}(s)$ remain static at the last collision-free point, whilst $\tilde{\gamma}'(s)$ is still varying in $\mathcal{M}_{\rm free}$, thus $0 = g(\Delta\tilde{\gamma}') < g(\Delta \tilde{\gamma})$.  
\end{enumerate}
\end{enumerate}

\end{proof}

\begin{theorem}\label{thm:shorter_length}
$\tilde{\gamma}$ is shorter than $\gamma$. 
\end{theorem}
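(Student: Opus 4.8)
The plan is to introduce the auxiliary curve $\tilde{\gamma}'$ as a bridge and to chain two inequalities. Since \textbf{Lemma~\ref{lem:tildegamma_is_shorter_than_tildegamma_prime}} already supplies $g(\tilde{\gamma}) \le g(\tilde{\gamma}')$, it suffices to establish the complementary inequality $g(\tilde{\gamma}') \le g(\gamma)$; transitivity then yields $g(\tilde{\gamma}) \le g(\tilde{\gamma}') \le g(\gamma)$, which is exactly the claim. I would open by stating this reduction, so that the whole argument collapses to comparing $\tilde{\gamma}'$ against the original motion $\gamma$. The point of routing through $\tilde{\gamma}'$ rather than comparing $\tilde{\gamma}$ to $\gamma$ directly is that $\tilde{\gamma}'$ and $\gamma$ share the same family of taut tethers $S^{\mathcal{M}_{\rm free}}_{\alpha_1*\gamma([0,s])}$ and differ only in which point of that tether is marked: the robot tip for $\gamma$, versus the trailing point at fixed arc length $L$ for $\tilde{\gamma}'$.

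To prove $g(\tilde{\gamma}') \le g(\gamma)$ I would imitate the proof of \textbf{Lemma~\ref{lem:tildegamma_is_shorter_than_tildegamma_prime}} and compare the two curves through their infinitesimal arc-length elements $g(\Delta\tilde{\gamma}')$ and $g(\Delta\gamma)$, then integrate. On the parameter set where $\gamma$ is admissible the two curves coincide by construction, so the elements are equal. On the constraint-violated set I would invoke \textbf{Lemma~\ref{lem:liu}}: letting $a(s)$ denote the last obstacle--tether contact point, the final tether segment from $a(s)$ to the robot $\gamma(s)$ is straight. Hence, between consecutive wrapping events, both $\gamma(s)$ and $\tilde{\gamma}'(s)$ lie on the ray issuing from the fixed point $a(s)$ in some unit direction $u(s)$, and one may write $\gamma(s) = a + \rho(s)\,u(s)$ for the tip and $\tilde{\gamma}'(s) = a + r\,u(s)$ for the arc-length-$L$ point, where $r$ is constant while $a$ is fixed and $r < \rho(s)$ because the configuration is constraint-violated. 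Differentiating and using $u \perp \dot{u}$ gives $|\dot{\gamma}|^2 = \dot{\rho}^2 + \rho^2|\dot{u}|^2 \ge r^2|\dot{u}|^2 = |\dot{\tilde{\gamma}}'|^2$, so $g(\Delta\tilde{\gamma}') \le g(\Delta\gamma)$ pointwise; integrating over $[0,1]$ yields $g(\tilde{\gamma}') \le g(\gamma)$. I would also flag the easy subcase in which the arc-length-$L$ point lies strictly before $a(s)$ on a fixed portion of the tether, where $\tilde{\gamma}'$ is momentarily static and the inequality is immediate.

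The main obstacle is the bookkeeping hidden in ``the final segment is straight.'' The radial/angular decomposition above is clean only while $a(s)$ and the pivots preceding it are fixed; at the isolated instants where the tether wraps onto or peels off an obstacle vertex, $a(s)$ jumps and the identity of the tether portion before $a(s)$ changes, so the decomposition must be re-initialised. I would argue that these events are of measure zero in $s$ and that both $\tilde{\gamma}'$ and $\gamma$ stay continuous across them (continuity of $\tilde{\gamma}'$ following from the same reachable-region-boundary argument used earlier for $\tilde{\gamma}$), so the pointwise inequality holds on a full-measure subset and integrates without change. Having secured $g(\tilde{\gamma}') \le g(\gamma)$, combining it with \textbf{Lemma~\ref{lem:tildegamma_is_shorter_than_tildegamma_prime}} closes the proof.
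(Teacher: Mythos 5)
Your proposal takes essentially the same route as the paper's proof: it chains $g(\tilde{\gamma}) \leq g(\tilde{\gamma}') \leq g(\gamma)$ through the bridge curve $\tilde{\gamma}'$, gets the first inequality from \textbf{Lemma~\ref{lem:tildegamma_is_shorter_than_tildegamma_prime}}, and establishes the second by comparing infinitesimal arc-length elements, using the colinearity of $a$, $b'$, and $c$ from \textbf{Lemma~\ref{lem:liu}} together with the static-tether subcase (the paper's $L \leq L_a$ case). Your explicit decomposition $|\dot{\gamma}|^2 = \dot{\rho}^2 + \rho^2|\dot{u}|^2 \geq r^2|\dot{u}|^2$ merely makes rigorous the paper's verbal ``same angular velocity'' argument, and your measure-zero handling of wrapping events plays the role of the paper's brief reduction to polygonal environments.
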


\begin{proof}
We first prove the claim in an arbitrary polygonal environment. Then it is also correct in any arbitrary environment because a sufficiently fine polygonal approximation of the environment always exists. 

Assume that all obstacles in $\mathcal{M}$ are polygonal. 
We prove the claim by showing that $g(\tilde{\gamma}) \leq g(\tilde{\gamma}')\leq g(\gamma)$, where the equalities hold only when $\gamma$ is admissible. 
The former inequality has been proven in \textbf{Lemma~\ref{lem:tildegamma_is_shorter_than_tildegamma_prime}}, and the latter one is verified by comparing the length of their infinitesimal path segment, $g(\Delta \tilde{\gamma}') \triangleq g(\tilde{\gamma}'([s, s+\Delta s]))$ and $g(\Delta \gamma) \triangleq g(\gamma([s, s+\Delta s]))$. 
For easy notation, we denote $a = a(s)$ as the last obstacle vertex that the tether hits, $b' = b'(s)$ as $\tilde{\gamma}'(s)$, and $c = c(s)$ as $\gamma(s)$. 
The length of the truncated tether from the base point to $a$ is denoted as $L_a$. 
\begin{enumerate}
\item For the $s$ where the configuration induced by $\gamma$ is admissible, $\gamma$ and $\tilde{\gamma}'$ coincide, thus $g(\Delta\tilde{\gamma}') = g(\Delta \gamma)$. 
\end{enumerate}

\begin{enumerate}
\item When $L \leq L_a$, the truncated tether from the base point to $a$ remains static, thus $g(\Delta\tilde{\gamma}') = 0$. 
In contrast, $g(\Delta \gamma) >  0$. 
Hence $g(\Delta\tilde{\gamma}') < g(\Delta \gamma)$. 
\item When $L > L_a$, by \textbf{Lemma~\ref{lem:liu}} the points $a$, $b'$, and $c$ are always co-linear. 
Thus the motion of $b'$ and $c$ have the same angular velocity with respect to $a$. 
As such, $c$ has a longer distance to $a$, so the motion of $c$ is longer than that of $b'$, thus $g(\Delta\tilde{\gamma}') < g(\Delta\gamma)$. 

\end{enumerate}

\end{proof}

\subsection{Proof of Convexity}

\begin{theorem}\label{thm:convexity}
(Convexity of Workspace) Given any two configurations $(p_1, \alpha_1)$ and $(p_2, \alpha_2)$ in the workspace, define $\beta$ as a locally shortest collision-free path: 
\begin{equation}
\beta\triangleq S^{\mathcal{C}}_{\alpha_1^{-1}*\alpha_2}
\end{equation}
then $\beta$ is admissible, i.e., 
\begin{equation}
g(S^{\mathcal{M}_{\rm free}}_{\alpha_1*\beta([0, s])}) \leq L, \forall s\in [0, 1]
\end{equation}
\end{theorem}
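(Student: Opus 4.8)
The plan is to argue by contradiction, using the alternative-path machinery already assembled in \textbf{Theorem~\ref{thm:shorter_length}}. Suppose $\beta = S^{\mathcal{C}}_{\alpha_1^{-1}*\alpha_2}$ is \emph{not} admissible, so that some induced configuration along $\beta$ violates the maximum tether length constraint. The key observation is that $\beta$ is itself a legitimate input to the construction of \textbf{Definition~\ref{def:tildebeta}}: it is a collision-free path from $p_1$ to $p_2$ (local shortening preserves endpoints), and since local shortening preserves homotopy we have $\alpha_1 * \beta \simeq \alpha_1 * \alpha_1^{-1} * \alpha_2 \simeq \alpha_2$, which is exactly the homotopy hypothesis required to form $\tilde{\gamma}$. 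I would therefore set $\gamma = \beta$ and produce the alternative path $\tilde{\beta}$.

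By construction $\tilde{\beta}$ is admissible (its induced tether never exceeds length $L$), and because both endpoint configurations $(p_1,\alpha_1)$ and $(p_2,\alpha_2)$ are assumed to lie in the workspace, the branch selector satisfies $\tilde{s}(0)=\tilde{s}(1)=1$, so $\tilde{\beta}$ shares the endpoints $p_1,p_2$ with $\beta$. \textbf{Lemma~\ref{lem:tildebeta_is_reconfig}} gives $\tilde{\beta}\simeq\beta$ in $\mathcal{C}$, and \textbf{Theorem~\ref{thm:shorter_length}} gives $g(\tilde{\beta})\leq g(\beta)$, with \emph{strict} inequality precisely because $\beta$ was assumed inadmissible. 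Thus $\tilde{\beta}$ is a collision-free path, homotopic to $\beta$, and strictly shorter than $\beta$.

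This contradicts the defining property of $\beta$. Indeed, $\beta = S^{\mathcal{C}}_{\alpha_1^{-1}*\alpha_2}$ is the locally shortest collision-free path in its homotopy class, and by the discussion following \textbf{Remark~\ref{rem:simply-connected}} this locally shortest curve is also the \emph{globally} shortest one in that class, so no homotopic collision-free path can be strictly shorter. Since $\tilde{\beta}$ is exactly such a path, we reach a contradiction, and therefore $\beta$ must be admissible, i.e. $g(S^{\mathcal{M}_{\rm free}}_{\alpha_1*\beta([0,s])})\leq L$ for all $s\in[0,1]$.

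The main obstacle I anticipate is the final step, namely converting ``strictly shorter homotopic path'' into a genuine contradiction. This requires that the locally shortest (taut) path in a fixed homotopy class of a planar environment is the unique global length minimiser of that class, a standard but non-trivial fact, and it is exactly the place where the simply-connectedness result of \textbf{Section~\ref{section_simply_connected}} is invoked to equate local and global optimality. All the heavy geometric lifting, in particular the infinitesimal length comparison built on the co-linearity of the last contact point, $b'$, and $c$, has already been discharged inside \textbf{Theorem~\ref{thm:shorter_length}}, so beyond this optimality subtlety the argument reduces to a short contradiction.
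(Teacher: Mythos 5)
Your proposal is correct and takes essentially the same route as the paper: a proof by contradiction in which the inadmissible $\beta$ is fed into the construction of \textbf{Definition~\ref{def:tildebeta}}, yielding $\tilde{\beta}$ that is homotopic to $\beta$ by \textbf{Lemma~\ref{lem:tildebeta_is_reconfig}} and strictly shorter by \textbf{Theorem~\ref{thm:shorter_length}}, contradicting the minimality of $\beta = S^{\mathcal{C}}_{\alpha_1^{-1}*\alpha_2}$ in its homotopy class. Your added care in checking that $\beta$ satisfies the homotopy hypothesis $\alpha_1*\beta\simeq\alpha_2$, that the endpooints are preserved, and in invoking the discussion after \textbf{Remark~\ref{rem:simply-connected}} to upgrade local to global minimality merely makes explicit details that the paper's terser proof leaves implicit.
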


\begin{proof}
Proof by contradiction. 
If $\beta$ is not admissible, we can construct an alternative path $\tilde{\beta}$ as per \textbf{Definition~\ref{def:tildebeta}}. 
$\tilde{\beta}$ has been proven homotopic to $\beta$ by \textbf{Lemma~\ref{lem:tildebeta_is_reconfig}}, and shorter than $\beta$ by \textbf{Theorem~\ref{thm:shorter_length}}. 
This violates the assumption that $\beta$ is an untethered locally shortest path. 
So $\beta$ must be admissible. 
See Fig.~\ref{fig:preserving} for illustration. 
\end{proof}

\section{Solutions to Problems}\label{section_solution}
In this section, we provide efficient solutions to the optimal TR, optimal TP, optimal TMV, and optimal TTSP.

\subsection{Optimal Tethered Reconfiguration}

In this subsection, the solution to \textbf{Problem~\ref{prob:TR}} is proven attainable by an untethered path shortening (UPS) process, e.g., the naive local shortcut mechanism in 2D, instead of path searching in the workspace. 
\begin{theorem}\label{thm:TR}
Given the starting configuration $(p_1, \alpha_1)$, $g(\alpha_1)\leq L$, and the target configuration $(p_2, \alpha_2)$, $g(\alpha_2)\leq L$, the optimal tethered reconfiguration motion is 
\begin{equation}
\beta^* = S^{\mathcal{C}}_{\alpha_1^{-1}*\alpha_2}
\end{equation} 
\end{theorem}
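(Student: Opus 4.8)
The plan is to show that $\beta^* = S^{\mathcal{C}}_{\alpha_1^{-1}*\alpha_2}$ is (i) a valid tethered reconfiguration motion and (ii) the shortest such motion, by combining the convexity established in \textbf{Theorem~\ref{thm:convexity}} with the simply-connectedness from \textbf{Remark~\ref{rem:simply-connected}}. The key observation is that \textbf{Theorem~\ref{thm:convexity}} has already done the heavy lifting: it proves that the untethered locally shortest collision-free path $\beta^*$ in the homotopy class specified by $\alpha_1^{-1}*\alpha_2$ is in fact admissible, i.e., $g(S^{\mathcal{M}_{\rm free}}_{\alpha_1*\beta^*([0,s])})\leq L$ for all $s\in[0,1]$. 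So the hardest part of the argument is already assumed.

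First I would verify that $\beta^*$ is a legitimate reconfiguration motion in the sense of \textbf{Problem~\ref{prob:TR}}. This requires checking two conditions. The admissibility condition $g(S^{\mathcal{M}_{\rm free}}_{\alpha_1*\beta^*([0,s])})\leq L$ for all $s$ is exactly the conclusion of \textbf{Theorem~\ref{thm:convexity}}, so it holds directly. The homotopy condition $\alpha_2\simeq \alpha_1*\beta^*$ follows because $\beta^*$ is obtained by locally shortening $\alpha_1^{-1}*\alpha_2$, and local shortening preserves homotopy (as used repeatedly in the proof of \textbf{Remark~\ref{rem:simply-connected}}); concatenating $\alpha_1$ on the left gives $\alpha_1*\beta^* \simeq \alpha_1*(\alpha_1^{-1}*\alpha_2)\simeq \alpha_2$, where the last step is the standard cancellation of a curve with its backtracking. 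I would also note that $\beta^*\subset\mathcal{C}$ by definition of $S^{\mathcal{C}}_{\cdot}$, and that its endpoints are $p_1$ and $p_2$.

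Second, I would establish optimality. Let $\beta$ be any valid reconfiguration motion from $c_1$ to $c_2$. By \textbf{Remark~\ref{rem:simply-connected}}, every reconfiguration motion between these two configurations is homotopic, so $\beta\simeq\beta^*$; in particular both lie in the same homotopy class of collision-free paths from $p_1$ to $p_2$. Since $\beta^*$ is by construction the \emph{locally} shortest collision-free curve in that homotopy class, and since simply-connectedness (as remarked after \textbf{Remark~\ref{rem:simply-connected}}) upgrades local shortness to global shortness within the class of reconfiguration motions, we get $g(\beta^*)\leq g(\beta)$. Hence $\beta^*$ is the optimal tethered reconfiguration motion.

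The only subtlety worth dwelling on — and the step I would treat most carefully — is the logical direction of the optimality argument. One must be sure that the class of curves over which $\beta^*$ is locally shortest actually contains all competing reconfiguration motions, which is precisely what \textbf{Remark~\ref{rem:simply-connected}} guarantees. Without simply-connectedness one could imagine a shorter admissible motion in a different homotopy class, but the remark rules this out. I expect no further obstacle: the proof is essentially a bookkeeping assembly of \textbf{Theorem~\ref{thm:convexity}} (validity/admissibility) and \textbf{Remark~\ref{rem:simply-connected}} (the homotopy class is unique, so local optimality is global optimality).
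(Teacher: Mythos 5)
Your proposal is correct and takes essentially the same approach as the paper's proof: admissibility of $S^{\mathcal{C}}_{\alpha_1^{-1}*\alpha_2}$ follows from \textbf{Theorem~\ref{thm:convexity}}, every competing reconfiguration motion $\beta$ satisfies $\beta\simeq\alpha_1^{-1}*\alpha_2$ by the same concatenation--cancellation manipulation, and optimality follows because the untethered locally shortest path in that unique homotopy class can only be shorter than any motion additionally subject to the tether-length constraint. The only cosmetic difference is that you cite \textbf{Remark~\ref{rem:simply-connected}} for uniqueness of the homotopy class, whereas the paper re-derives the identity $\beta^*\simeq S^{\mathcal{C}}_{\alpha_1^{-1}*\alpha_2}$ directly from the definition of a reconfiguration motion --- the underlying argument is identical.
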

\begin{proof}

On one hand, since both $(p_1, \alpha_1)$ and $(p_2,\alpha_2)$ are admissible, by \textbf{Theorem~\ref{thm:convexity}}, $S^{\mathcal{C}}_{\alpha_1^{-1}*\alpha_2}$ is admissible. 

On the other hand, by the definition of reconfiguration motion, we have 
\begin{equation}
\begin{aligned}
&\alpha_1*\beta^*\simeq \alpha_2\\
\Rightarrow& \alpha_1^{-1}*\alpha_1*\beta^*\simeq \alpha_1^{-1}*\alpha_2\\
\Rightarrow& \beta^*\simeq \alpha_1^{-1}*\alpha_2\\
\Rightarrow& \beta^*\simeq S^{\mathcal{C}}_{\alpha_1^{-1}*\alpha_2}
\end{aligned}
\end{equation}

Finally, note that $S^{\mathcal{C}}_{\alpha_1^{-1}*\alpha_2}$ is the locally shortest path whilst $\beta^*$ needs to additionally satisfy the maximum tether length constraint, we have
\begin{equation}
g(S^{\mathcal{C}}_{\alpha_1^{-1}*\alpha_2}) \leq g(\beta^*)
\end{equation}
Hence $\beta^* = S^{\mathcal{C}}_{\alpha_1^{-1}*\alpha_2}$. 

\end{proof}

In practical, the configurations $(p_1, \alpha_1)$ and $(p_2, \alpha_2)$ are usually given after they have been observed to be reachable, whereby the collision-free paths $S^{\mathcal{C}}_{\alpha_1}$ and $S^{\mathcal{C}}_{\alpha_2}$ have been known. 
Hence only the untethered path shortening process needs to be performed.

Besides, as a simple application of the workspace convexity, the following corollaries may also inspire the tethered robot planning community. 
\begin{corollary}\label{cor:max_length_point}
Given two configurations $(p_1, \alpha_1)$ and $(p_2, \alpha_2)$, the minimum tether length for a robot to perform the optimal reconfiguration is $\max\{g(\alpha_1), g(\alpha_2)\}$. 
\end{corollary}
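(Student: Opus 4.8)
The plan is to establish the claimed value $\max\{g(\alpha_1), g(\alpha_2)\}$ as both a lower and an upper bound on the minimum admissible tether length for the optimal reconfiguration motion $\beta^* = S^{\mathcal{C}}_{\alpha_1^{-1}*\alpha_2}$ identified in \textbf{Theorem~\ref{thm:TR}}. I first observe that $\beta^*$ is fixed by the homotopy class of $\alpha_1^{-1}*\alpha_2$ and does not depend on the bound $L$, so the quantity to compute is well-posed: it is $\max_{s\in[0,1]} g(S^{\mathcal{M}_{\rm free}}_{\alpha_1*\beta^*([0,s])})$, the largest tether length encountered along the motion.

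First I would dispatch the lower bound, which is immediate. The induced configuration at $s=0$ is $(p_1, \alpha_1)$ and at $s=1$ is $(p_2, \alpha_2)$, whose tethers have lengths $g(\alpha_1)$ and $g(\alpha_2)$ respectively. Since both are induced configurations of the motion, any admissible execution forces $L \geq g(\alpha_1)$ and $L \geq g(\alpha_2)$, hence $L \geq \max\{g(\alpha_1), g(\alpha_2)\}$.

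The upper bound is where the workspace convexity does the real work, and the key observation is that \textbf{Theorem~\ref{thm:convexity}} holds for an arbitrary maximum tether length. I would specialise that bound to $L := \max\{g(\alpha_1), g(\alpha_2)\}$. Under this choice both endpoint configurations satisfy $g(\alpha_1) \leq L$ and $g(\alpha_2) \leq L$, so $(p_1, \alpha_1)$ and $(p_2, \alpha_2)$ lie in the workspace associated with this $L$. \textbf{Theorem~\ref{thm:convexity}} then asserts directly that $\beta^* = S^{\mathcal{C}}_{\alpha_1^{-1}*\alpha_2}$ is admissible, i.e. $g(S^{\mathcal{M}_{\rm free}}_{\alpha_1*\beta^*([0,s])}) \leq \max\{g(\alpha_1), g(\alpha_2)\}$ for every $s$. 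Combining the two bounds shows the maximum tether length along $\beta^*$ equals $\max\{g(\alpha_1), g(\alpha_2)\}$, which is the sought minimum.

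The main point is conceptual rather than computational: one must recognise that convexity is stated relative to a free parameter $L$ and may therefore be invoked with $L$ tuned to the endpoint tether lengths, after which no interior estimate of the tether length along $\beta^*$ is needed at all. The only thing I would double-check is well-posedness, namely that the optimal reconfiguration curve is the same regardless of $L$ (true because $\beta^*$ is the untethered locally shortest curve in a fixed homotopy class), so that the phrase ``the minimum tether length to perform the optimal reconfiguration'' is unambiguous.
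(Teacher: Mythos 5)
Your proof is correct and takes essentially the same route as the paper's: the paper likewise sets the tether length to $\max\{g(\alpha_1), g(\alpha_2)\}$, observes that both endpoint configurations remain admissible, invokes \textbf{Theorem~\ref{thm:convexity}} to conclude $S^{\mathcal{C}}_{\alpha_1^{-1}*\alpha_2}$ is admissible (the upper bound), and gets the lower bound from the fact that any smaller tether would make one endpoint configuration unreachable. Your explicit well-posedness remark---that $\beta^*$ is determined by the homotopy class alone and does not depend on $L$---is a small refinement the paper leaves implicit.
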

\begin{proof}
Let $L^*$ denote the minimum required tether length of the robot to fulfil the given optimal reconfiguration motion. 
Setting the tether length as $\max\{g(\alpha_1), g(\alpha_2)\}$, the configurations $(p_1, \alpha_1)$ and $(p_2, \alpha_2)$ are still admissible. 
Then by \textbf{Theorem~\ref{thm:convexity}}, the collision-free motion $S^{\mathcal{C}}_{\alpha_1^{-1}*\alpha_2}$ is admissible. 
Hence $L^*\leq \max\{g(\alpha_1), g(\alpha_2)\}$.  
On the other hand, if $L^*$ is smaller than $\max\{g(\alpha_1), g(\alpha_2)\}$, then either the starting configuration or the goal configuration would be unreachable. 
Hence $L^*\geq \max\{g(\alpha_1), g(\alpha_2)\}$. 
In summary, $L^* = \max\{g(\alpha_1), g(\alpha_2)\}$.

\end{proof}

Motivated by the efficient optimal TR solution, when solving high-level optimal tethered path planning problems, to avoid pre-calculating the workspace, a generic multi-stage pipeline is presented as follows: 
\begin{enumerate}
\item (Goal Configuration Pre-Calculation, GCP)
We identify the admissible configuration to visit each goal location. 
If the configuration is non-unique, we find all of them (by \textbf{Remark~\ref{rem:finite_number_of_configurations}}, there are only finite ones).  
Since the pre-calculation is restricted to goal locations only, GCP could be much more efficient than WP.

\item (Optimal Reconfiguration Construction) 
For any two configurations, the optimal tethered reconfiguration motion is constructed by an untethered path shortening (UPS) process. 
This process is also efficient because it is performed in the 2D space instead of in the workspace, and more importantly, the maximum tether length constraint is neglected. 
\item (Constructing the Optimal Solution) After all reconfiguration motions are constructed, they are concatenated to form the tethered robot planning task solutions. 
And the optimal solution is the shortest one among all the valid solutions. 
Non-enumerative techniques such as the Dynamic Programming (DP) approach could further speed up this step.
\end{enumerate}

\subsection{Solution to Optimal Tethered Planning}
The multi-stage process to solve \textbf{Problem~\ref{prob:TP}} is stated as follows: 
 
\textbf{Stage 1: Goal Configuration Pre-Collection.} 
We perform a GCP process to $p_2$. 
The goal configurations are formally written as (say there are $n$ ones) 
\begin{equation}
\begin{aligned}
&\{c_{21}, \cdots, c_{2n}\}\\
c_{2i}\triangleq &(p_2, \alpha_{2i}),\ i = 1, \cdots, n
\end{aligned}
\end{equation}

\textbf{Stage 2: Optimal Reconfiguration Construction. } 
From $c_1$ to each $c_{2i}$, the optimal tethered reconfiguration motion is obtained using the optimal TR solution discussed above, denoted as 
\begin{equation}
\begin{aligned}
&\{\beta_{1}, \cdots, \beta_{n}\}\\
\beta_i \triangleq &\alpha^{\mathcal{C}}_{\alpha_1^{-1}*\alpha_{2i}},\ i = 1, \cdots, n
\end{aligned}
\end{equation}

\textbf{Stage 3: Construct the Optimal Solution. }
The least-cost reconfiguration motion is the desired solution
\begin{equation}
\beta^* = \mathop{\rm argmin}\limits_{\beta\in \{\beta_i\}} g(\beta)
\end{equation}

\begin{algorithm}[t]
    \caption{Optimal Tethered Multi-Goal Visiting Planner}\label{alg:TMV}
    \begin{algorithmic}[1]
        \Require Environment $\mathcal{M}$, Initial configuration $c_0 = (p_0, \alpha_0)$, goal points $p_1, \cdots, p_N$ 
        \Ensure The optimal path $\beta^*$
\State // Goal Configuration Pre-Calculation
\State $n_0= 1$, $c_{01} \triangleq (p_0, \alpha_{01}) = c_0$
\State $\{c_{ij_i} \triangleq (p_i, \alpha_{ij_i})| j_i = 1, \cdots, n_i, i = 1, \cdots, N\}$
\Statex \qquad $\gets$ ShortestNonhomotopicPathPlanner($p_1, \cdots, p_N$)
\State $n_{N+1}= 1$, $c_{(N+1)1}\triangleq (p_{N+1}, \alpha_{(N+1)1}) = c_0$
\State // Solving All Optimal TRs
\For{$i = 0, \cdots, N$}
	\State Initialise $F_{i(i+1)}$ and $\Gamma_{i(i+1)}$
	\State // The dimension of $F_{i(i+1)}$ and $\Gamma_{i(i+1)}$ is $n_i\times n_{i+1}$
	\For{$j = 1, \cdots, n_i$}
		\For{$k = 1, \cdots, n_{i+1}$}
			\State $\Gamma_{i(i+1)}(j, k) \gets S^{\mathcal{C}}_{\alpha_{ij}^{-1}*\alpha_{(i+1)k}}$
			\State $F_{i(i+1)}(j, k) \gets g(\Gamma_{i(i+1)}(j, k))$
		\EndFor
	\EndFor
\EndFor
\State // Construct the Optimal Solution
\For{$i = 0, \cdots, N$}
	\For{$j = 1, \cdots, n_{i+1}$}
		\State $\Gamma_{(i+1)j}^* = \Gamma_{i1}^**\Gamma_{i(i+1)}(1, j)$
		\State $F_{(i+1)j}^* = g(\Gamma_{(i+1)j}^*)$ 
		\For{$k = 2, \cdots, n_i$}
			\State path\_temp $= \Gamma_{ik}^**\Gamma_{i(i+1)}(k, j)$
			\State cost\_temp $= F_{ik}^* + F_{i(i+1)}(k, j)$
			\If{cost\_temp $< F_{(i+1)j}^*$ }
				\State $\Gamma_{(i+1)j}^* = $ path\_temp
				\State $F_{(i+1)j}^* =$ cost\_temp
			\EndIf
		\EndFor
	\EndFor
\EndFor
\State $\beta^* = \Gamma_{(N+1)1}^*$
    \end{algorithmic}  
\end{algorithm}

\subsection{Solution to Optimal Tethered Multi-Goal Visiting}\label{subsection_TMV}

The solver for \textbf{Problem~\ref{prob:TMV}} is stated as follows: 

\textbf{Stage 1: Goal Configuration Pre-Collection.}
A GCP process is again required here for collecting the admissible configurations that visit goals $p_1, \cdots, p_N$. 
Let there be $n_1, \cdots, n_N$ possible configurations to visit each goal, denoted as 
\begin{equation}
\left\{
\begin{aligned}
&\{c_{1j_1} = (p_1, \alpha_{1j_1})\},\ j_1 = 1, \cdots, n_1\\
&\qquad \vdots\\
&\{c_{Nj_N} = (p_N, \alpha_{Nj_N})\},\ j_N = 1, \cdots, n_N
\end{aligned}
\right.
\end{equation}

\textbf{Stage 2: Optimal Reconfiguration Construction.}
Recall that we have formally written the $0$-th and $(N+1)$-th goals (and their configurations) as the base point and the trivial configuration in Eqn.~(\ref{eqn:0andn+1}). 
Arbitrarily choosing the $j$-th admissible configuration that visits $p_i$ and the $k$-th admissible configuration that visits $p_{i+1}$, by performing an UPS process, the optimal tethered reconfiguration motion is stored as $\Gamma_i(j, k)$, 
\begin{equation}
\begin{aligned}
\Gamma_i(j, k) \triangleq &\Gamma_{i(i+1)}(j, k) \triangleq \alpha^{\mathcal{C}}_{\alpha_{ij}^{-1}*\alpha_{(i+1)k}}\\
&j = 1, \cdots, n_i,\ k = 1, \cdots, n_{i+1}
\end{aligned}
\end{equation}
The length of paths in $\Gamma_{i}$ are stored in $F_{i}$: 
\begin{equation}
\begin{aligned}
F_i(j, k) \triangleq& F_{i(i+1)}(j, k) \triangleq g(\Gamma_{i(i+1)}(j, k))\\
&j = 1, \cdots, n_i,\ k = 1, \cdots, n_{i+1}
\end{aligned}
\end{equation}
Picking up one admissible configuration for every goal location, they uniquely specify a valid solution of TMV: 
\begin{equation}
\begin{aligned}
(j_1, \cdots, &j_{N})\\
\mapsto \alpha_0&*\Gamma_{0}(1, j_1)*\Gamma_{1}(j_1, j_2)*\cdots\\
&*\Gamma_{N-1}(j_{N-1}, j_N)*\Gamma_{N}(j_N, 1),\ \\
&j_i = 1, \cdots, n_i, i = 1, \cdots, N
\end{aligned}
\end{equation}
with the movement cost 
\begin{equation}
\sum\limits_{i = 0}^N F_i(j_i, j_{i+1})
\end{equation}
Here, note that the reconfiguration motions from or to the home configuration have been constructed by the GCP process, they need not be constructed again. 
Concretely, $\Gamma_0(1, j_1)$, the optimal reconfiguration motion from $(p_0, \alpha_0)$ to $(p_1, \alpha_{1j_1})$, is the path $S_{\alpha_{1j_1}}^\mathcal{C}$ which has been generated by the GCP process. 
Also, $\Gamma_N(j_N, 1)$ is the backtracking of $S^{\mathcal{C}}_{\alpha_{Nj_N}}$. 
Therefore, the range of $i$ that requires reconfiguration motion construction is from $1$ to $N-1$, i.e., the number of UPS processes is $\sum_{i=1}^{N-1}n_i\times n_{i+1}$. 
After constructing all optimal tethered reconfiguration motions, all valid TMV solutions could be constructed, whose number is $n_1\times n_2\times \cdots\times n_N$, wherein the optimal one is the least-cost one.
In practical, non-enumerative search of the optimal TMV solution is naturally motivated, presented in the next stage. 

\begin{figure*}
\centering
\includegraphics[width=0.98\textwidth]{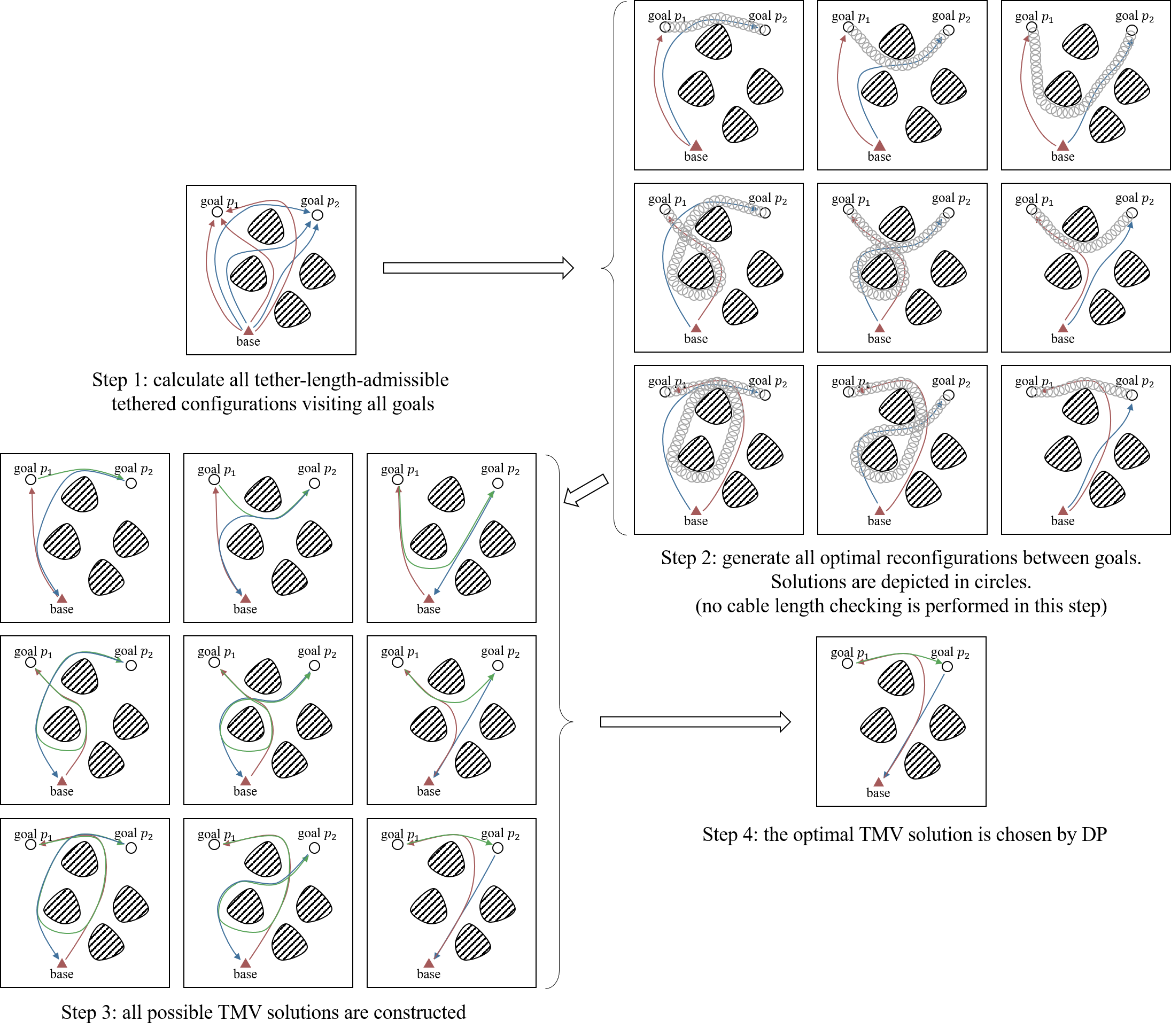}
\caption{Flowchart of the proposed optimal TMV solver for an optimal tethered $2$-goal visiting task. 
Three admissible configurations to visit each goal are shown in Step 1. 
Then, arbitrarily picking up one admissible configuration for each goal location, a UPS process is used to construct the optimal TR solution. 
All the $9$ optimal TR solutions are shown in Step 2. 
After that, we concatenate the path segments from $p_0$ to $p_1$ (constructed in Step 1), $p_1$ to $p_2$ (constructed in Step 2), and $p_2$ to $p_0$ (constructed in Step 1), shown in Step 3. 
Finally, the optimal TMV solution is the least-cost one among all the motions, shown in Step 4. }\label{fig:tmv_solution}
\end{figure*}

\textbf{Stage 3: Constructing the Optimal Solution.}
It is further observed that the construction of the optimal TMV solution satisfies the principle of optimality of dynamic programming (DP). 
Denote the optimal sub-TMV problem as the optimal TMV problem with a truncated sequence of goals. 
For any determined configuration $c_{ij_i}$ to visit $p_i$, the truncated sub-TMV solutions, visiting $(p_0, p_1, \cdots, p_{i-1}, p_i)$ in order and $(p_i, p_{i+1}, \cdots, p_N, p_0)$ in order, have well-defined initial configuration and final configuration. 
And the truncated part of the optimal TMV solution must also be optimal sub-TMV solutions. 
Hence a dynamic programming approach with $(N+1)$ stages is proposed: 
Denote $\Gamma^*_{i}(j_i)$ as the optimal sub-TMV solution for the robot transiting from $c_0$ to $c_{ij_i}$, and $F^*_i(j_i)$ be its length, then for $i = 0, \cdots, N$, 

\begin{equation}
\begin{aligned}
&F^*_{i+1}(j_{i+1}) = \min\limits_{1\leq j_i\leq n_i}\left\{F^*_i(j_i) + F_i(j_i, j_{i+1})\right\}\\
&j_i^* = \mathop{\rm argmin}\limits_{1\leq j_i \leq n_i}\{F^*_i(j_i) + F_i(j_i, j_{i+1})\}\\
&\Gamma^*_{i+1}(j_{i+1}) = \Gamma^*_{i}(j_i^*)*\Gamma_{i}(j_i^*, j_{i+1})\\
&\qquad\qquad\qquad 1\leq j_{i+1} \leq n_{i+1},\ i = 0, \cdots, N 
\end{aligned}
\end{equation}
The initial state is given as 
\begin{equation}
\Gamma^*_0(1) = c_0,\ F^*_0(1) = 0
\end{equation}
In the $i$-th stage, $\Gamma^*_{i+1}$ and $F^*_{i+1}$ are calculated, and $\Gamma^*_{N+1}(1)$ will be our desired optimal TMV solution, with $F^*_{N+1}(1)$ being its length. All the paths involved in the DP process have been constructed, hence the DP process is numerical. 

In summary, solving an optimal TMV problem requires a GCP process, $\sum_{i=1}^{N-1} n_i\times n_{i+1}$ UPS processes, and solving a numerical DP process. 
The algorithmic diagram for solving the optimal TMV problem is provided in \textbf{Algorithm~\ref{alg:TMV}}. 
An illustration of the proposed algorithm applied to an optimal tethered $2$-goal visiting task is depicted in Fig.~\ref{fig:tmv_solution}.

\subsection{Optimal Tethered Travelling Salesman Problem}
Finally, to find the optimal solution of  \textbf{Problem~\ref{prob:TTSP}}, we formulate the optimal TTSP problem into a generalised travelling salesman problem (GTSP). 
This is inspired by the fact that each goal location only needs to be visited once. 

\textbf{Stage 1: Configuration Collection.} This stage is the same as the \textbf{Stage 1} in Section~\ref{subsection_TMV}, so we omit it. 

\textbf{Stage 2: Optimal Reconfiguration Construction. } 
Noticing that optimally solving a TSP problem is inevitably enumerative, hence all possible motion segments (i.e., the optimal tethered motion between any two configurations that visit different goal locations) need to be constructed. 
The number of all goal configurations is $\sum_{i=0}^{N+1} n_i$, hence the reconfiguration motions (and their length) can be stored in a symmetric matrix $\Gamma$ (and $F$) whose dimension is $(\sum_{i=0}^{N+1} n_i)\times (\sum_{i=0}^{N+1} n_i)$. 
Adopting the previous notations, the optimal reconfiguration motion from the $k_i$-th configuration visiting goal $i$ to the $k_j$-th configuration visiting goal $j$ is indexed by  
\begin{equation}
\begin{aligned}
&\Gamma_{ij}(k_i, k_j) = \Gamma(\sum\limits_{l = 0}^{i-1}n_l + k_i, \sum\limits_{l = 0}^{j-1}n_l + k_j)\\
&F_{ij}(k_i, k_j) = F(\sum\limits_{l = 0}^{i-1}n_l + k_i, \sum\limits_{l = 0}^{j-1}n_l + k_j)\\
&\qquad\qquad\qquad\qquad\qquad i, j = 0, \cdots, N+1
\end{aligned}
\end{equation}
Since the robot need not visit the same goal location multiple times, the diagonal blocks of $\Gamma$ are empty, 
\begin{equation}
\begin{aligned}
&\Gamma_{ii}(k_{i1}, k_{i2}) = \Gamma(\sum\limits_{l = 0}^{i-1}n_l + k_{i1}, \sum\limits_{l = 0}^{i-1}n_l + k_{i2}) = \varnothing,\\
&\qquad\qquad\qquad\qquad k_{i1}, k_{i2} =1, \cdots, n_i,\ i = 0, \cdots, N+1
\end{aligned}
\end{equation}
And when either $i$ or $j$ is equivalent to $0$ or $N+1$, the paths are directly obtained from the GCP module. 
Hence the number of UPS processes is $(\sum_{i=1}^N n_i)^2 - \sum_{i=1}^N n_i^2$. 

\textbf{Stage 3: Constructing the Optimal Solution. }
After all reconfiguration motions have been collected, we regard each set of configurations $C_i$ as a cluster. 
Then the optimal TTSP problem is essentially to find a distance-optimal travelling tour that visits one configuration $c_i$ from each cluster $C_i$, which is a generalised travelling salesman problem (GTSP). 
All the optimal reconfiguration motions have been constructed, hence the remaining unsolved part is a numerical GTSP problem in Graph Theory~\cite{Lien1993Transformation}. 

In summary, the optimal TTSP problem is solved by decomposing it into a GCP process, $(\sum_{i=1}^N n_i)^2 - \sum_{i=1}^N n_i^2$ UPS process, and a numerical GTSP process.

\begin{figure}[t]
\centering
\subfloat[]{
\includegraphics[width = 0.44\textwidth]{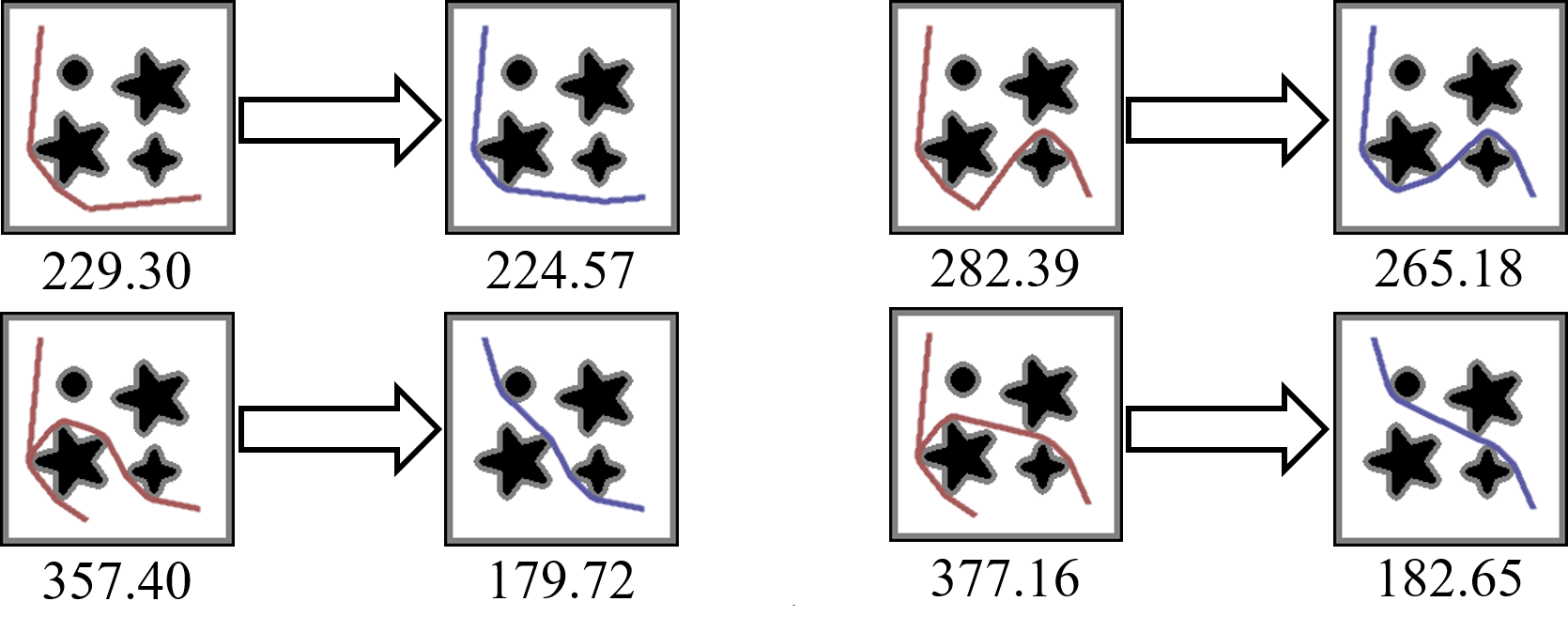}\label{fig:ups:a}
}\\
\subfloat[]{
\includegraphics[width = 0.48\textwidth]{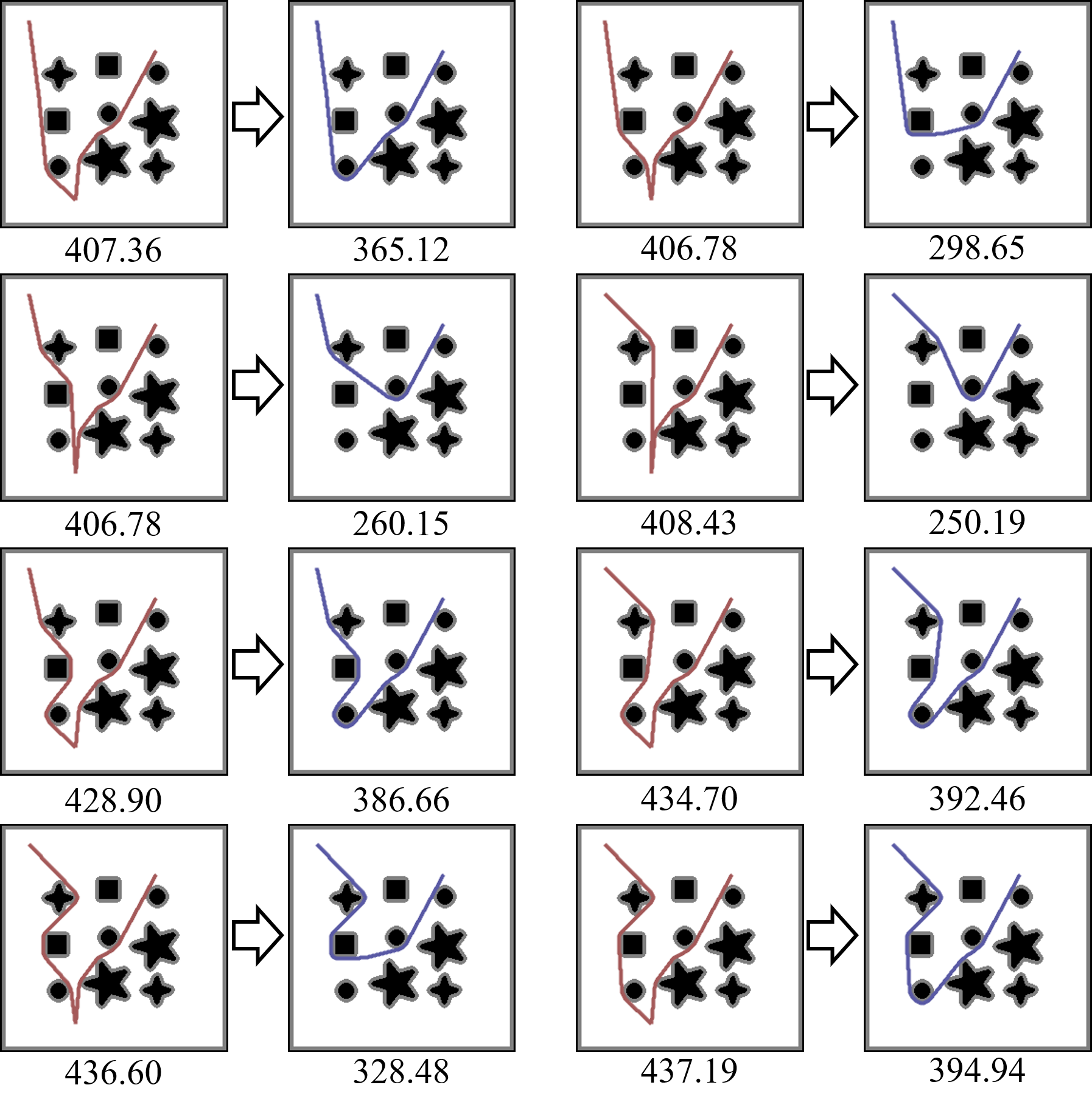}\label{fig:ups:b}
}
\caption{Illustrations of untethered path shortening processes. 
Red curves are the paths to be locally shortened, and blue paths are UPS resultant paths. 
The length of paths are listed for reference. 
(a) The four UPS processes performed when solving the optimal TP in the $160\times 160$ map. 
(b) The eight UPS processes performed when solving the optimal TP in the $240\times 240$ map. 
}\label{fig:ups}
\end{figure}

\begin{figure}[t]
\centering
\subfloat[Problem]{
\includegraphics[width=0.16\textwidth]{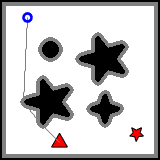}\label{fig:TP:a}
}
~~~~~~~
\subfloat[Solution]{
\includegraphics[width=0.16\textwidth]{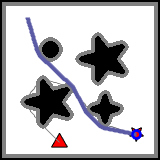}\label{fig:TP:b}
}\\
\subfloat[Problem]{
\includegraphics[width=0.23\textwidth]{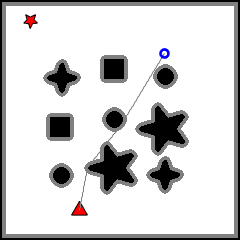}\label{fig:TP:c}
}
\subfloat[Solution]{
\includegraphics[width=0.23\textwidth]{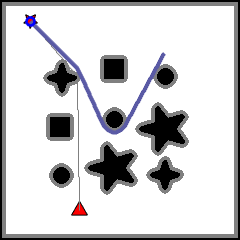}\label{fig:TP:d}
}
\caption{Illustration of the classic optimal tethered planning task, from a given starting configuration to a goal location. (a) The problem settings of the task in a map of size $160\times 160$ grids. 
The base point, the robot, the goal location, and the tether curve are marked in a red triangle, a blue circle, a red star, and a grey curve, respectively. 
(b) The optimal solution to the optimal TP problem in (a) is visualised as a blue curve. 
(c) The problem settings of the task in a map of size $240\times 240$ grids. The visualisations are the same as in (a). 
(d) The optimal solution to the optimal TP problem in (c) is visualised as a blue curve. 
}\label{fig:TP}
\end{figure}

\begin{figure*}
\centering
\subfloat[Goal $1$]{
\includegraphics[width = 0.15\textwidth]{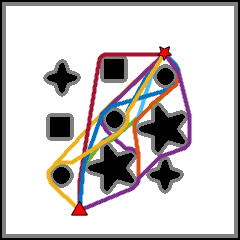}\label{fig:ksnpp:a}
}
\subfloat[Goal $2$]{
\includegraphics[width = 0.15\textwidth]{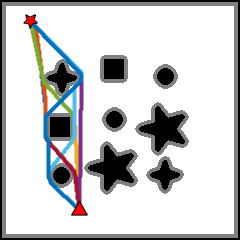}\label{fig:ksnpp:b}
}
\subfloat[Goal $3$]{
\includegraphics[width = 0.15\textwidth]{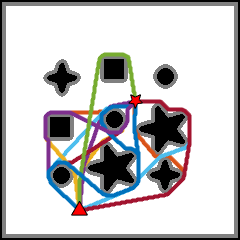}
}
\subfloat[Goal $4$]{
\includegraphics[width = 0.15\textwidth]{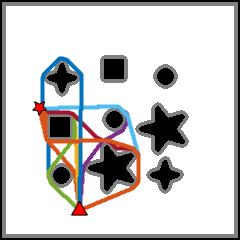}
}
\subfloat[Goal $5$]{
\includegraphics[width = 0.15\textwidth]{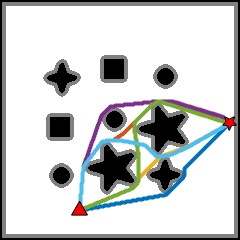}
}
\subfloat[Goal $6$]{
\includegraphics[width = 0.15\textwidth]{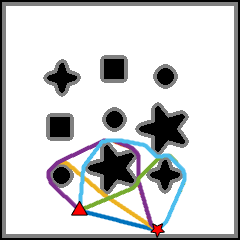}\label{fig:ksnpp:f}
}\\
\subfloat[Visit $1$]{
\includegraphics[width = 0.15\textwidth]{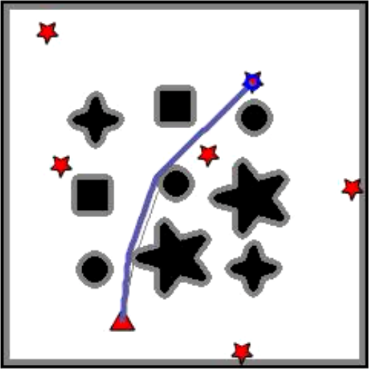}\label{fig:ksnpp:g}
}
\subfloat[Visit $2$]{
\includegraphics[width = 0.15\textwidth]{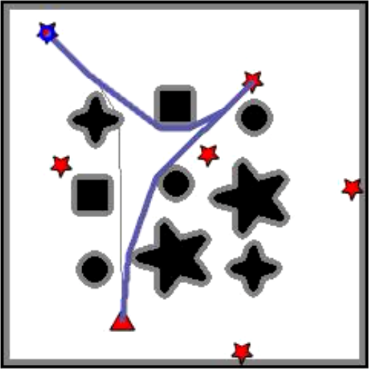}
}
\subfloat[Visit $3$]{
\includegraphics[width = 0.15\textwidth]{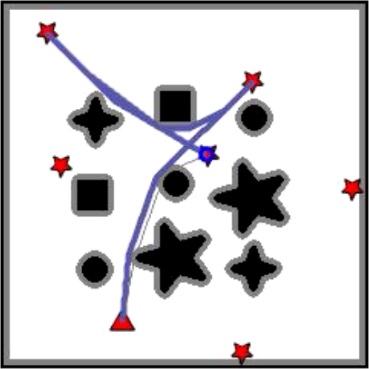}
}
\subfloat[Visit $4$]{
\includegraphics[width = 0.15\textwidth]{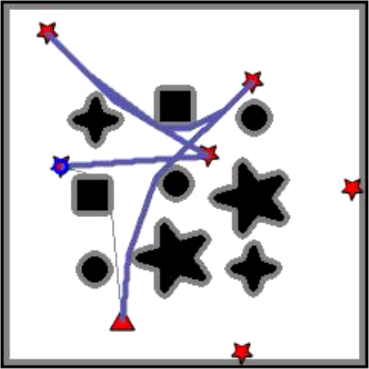}
}
\subfloat[Visit $5$]{
\includegraphics[width = 0.15\textwidth]{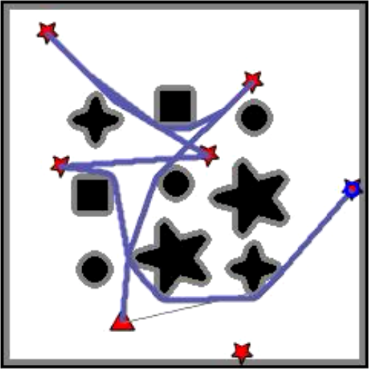}
}
\subfloat[Visit $6$]{
\includegraphics[width = 0.15\textwidth]{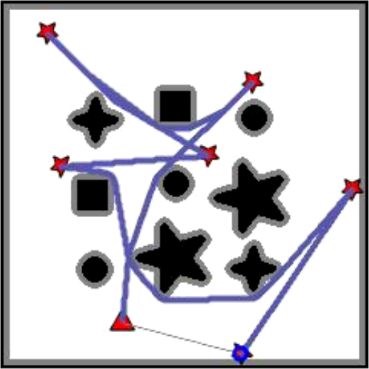}\label{fig:ksnpp:l}
}\\
\subfloat[The tether length whilst the robot motion. ]{
\includegraphics[width = 0.9\textwidth]{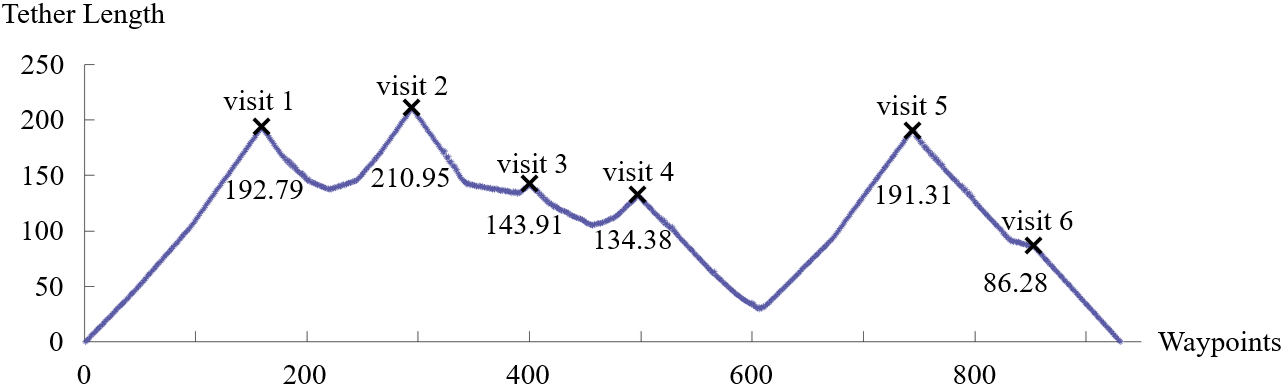}\label{fig:ksnpp:m}
}
\caption{
Illustration of the solution to an optimal tethered $6$-goal visiting task where the maximum tether length constraint is set as 250 (grids). 
The base point, the robot, the goal locations, and the tether curve are again marked in a red triangle, a blue circle, red stars, and a grey curve, respectively.
(a)$\sim$(f) The thick curves are the shortest non-homotopic paths from the base point to the goals, subject to the maximum tether length constraint. 
Different paths are shown in different colours. 
Figure (g)$\sim$(l) show the stills of the optimal TMV motion when the robot reaches each goal location. 
The path that the robot has visited is visualised in a thick blue curve. 
Figure (m) shows the tether length as a function curve with the robot moving along the optimal motion. 
}\label{fig:ksnpp}
\end{figure*}

\begin{figure*}[t]
\centering
\subfloat[Goal $1$]{
\includegraphics[width = 0.15\textwidth]{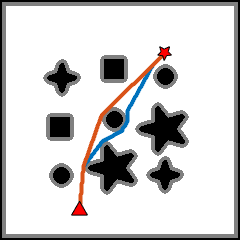}
}
\subfloat[Goal $3$]{
\includegraphics[width = 0.15\textwidth]{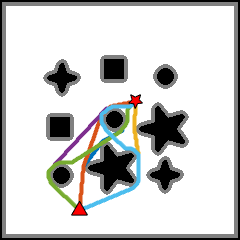}
}
\subfloat[Goal $4$]{
\includegraphics[width = 0.15\textwidth]{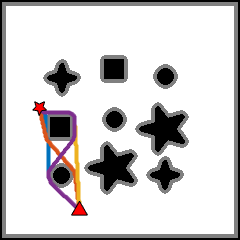}
}
\subfloat[Goal $5$]{
\includegraphics[width = 0.15\textwidth]{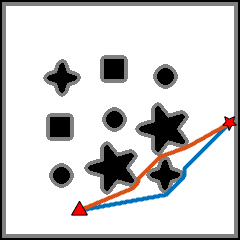}
}
\subfloat[Goal $6$]{
\includegraphics[width = 0.15\textwidth]{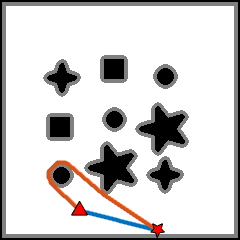}
}\\
\subfloat[Visit $1$]{
\includegraphics[width = 0.15\textwidth]{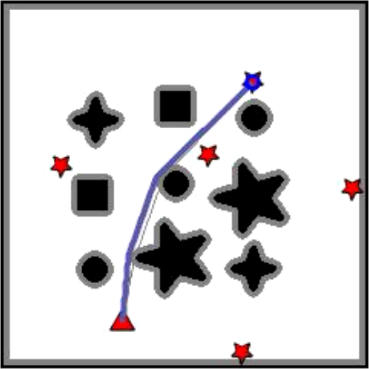}
}
\subfloat[Visit $3$]{
\includegraphics[width = 0.15\textwidth]{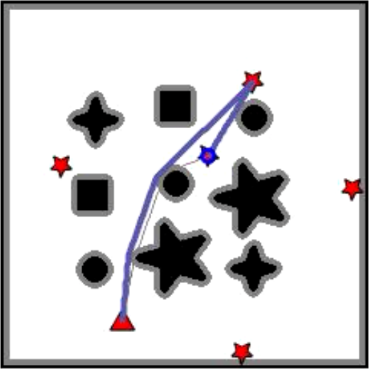}
}
\subfloat[Visit $4$]{
\includegraphics[width = 0.15\textwidth]{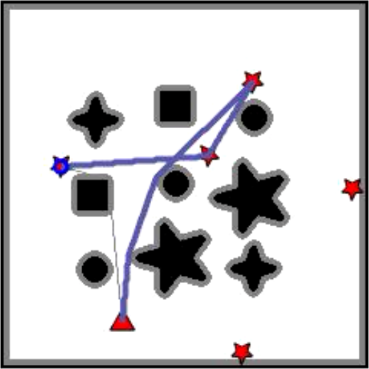}
}
\subfloat[Visit $5$]{
\includegraphics[width = 0.15\textwidth]{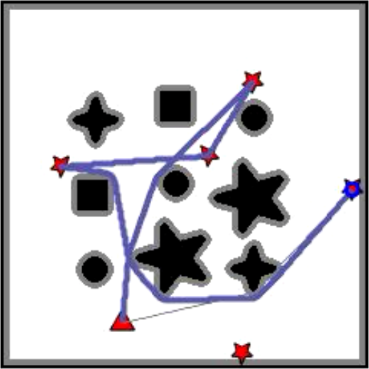}
}
\subfloat[Visit $6$]{
\includegraphics[width = 0.15\textwidth]{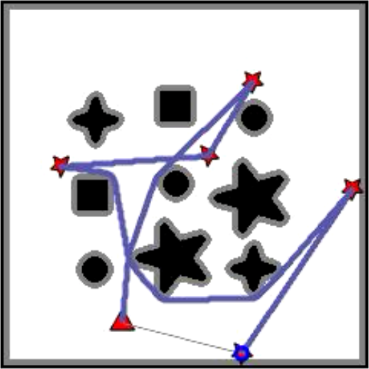}
}
\caption{Illustration of the paths when solving the same tethered $6$-goal visiting task under the maximum tether length constraint as $200$ grids. 
The base point, the robot, the goal location, and the tether curve are again marked in a red triangle, a blue circle, red stars, and a grey curve, respectively.
(a)$\sim$(e) The shortest non-homotopic paths to visit each goal location, where the top-left goal location $(31, 219)$ is not reachable. 
Different paths are visualised in different colours. 
Figure (f)$\sim$(j) show the stills of the optimal TMV motion when the robot reaches each goal location. 
The path that the robot has visited is visualised in a thick blue curve. 
}\label{fig:200}
\end{figure*}

\section{Experimental Results}\label{section_experiment}
The main result of this paper is the convexity of the workspace of a tethered robot moving in a planar environment. 
This is an intrinsic and fundamental property which eventually avoids the homotopy-aware path searching processes within the pre-calculated workspace. 
To the best of the author's knowledge, there did not exist a similar proposition before. 
As direct applications of the workspace convexity, the algorithms are improved in two aspects: 
The workspace pre-calculation (WP) process is replaced with a goal configuration pre-calculation (GCP) process, and the homotopy-aware path searching process is replaced with a locally untethered path shortening (UPS) process. 

Firstly, the details of the experiments are listed in Section~\ref{section_exp_detail}. 
Then, the efficiency of untethered path shortening processes is reported in Section~\ref{section_exp_ups}. 
After that, in the simulated experiments in Section~\ref{section_exp_TP}, the performance of the workspace convexity is evaluated by comparative experimentation of solving optimal TP tasks. 
Further, the case studies in Section~\ref{section_exp_TMV} show in detail how the optimal tethered multi-goal visiting (TMV) problem is solved within a reasonable computational time. 
Finally, Section~\ref{section_exp_TTSP} reports the solution of an optimal tethered travelling salesman problem. 
The reader is referred to the supplementary video for the animation of the simulated robot motion. 

\subsection{Implementation Details}\label{section_exp_detail}

The implementation details are provided as follows: 
\begin{enumerate}
\item The configurations with a self-crossing tether are ignored. 
\item We assume that the 2D C-space (the set of all collision-free locations) has been effectively pre-calculated, hence the collision-checking process is simply reading values from the C-space matrix. 
\item The UPS process is simply implemented as locally replacing a part of the curve with a homotopic Bresenham line segment in 2D grids~\cite{Bresenham1965Algorithm}.  
We loop through all parts of the curve to ensure the correctness of the resultant curve. 
\item Precise calculation of the tether length is not a focus of this paper, so we estimate the tether length of a configuration (i.e., the locally shortest curve in $\mathcal{M}_{\rm free}$) the same as the length of the collision-free locally shortest path homotopic to the tether (i.e., the locally shortest curve in $\mathcal{C}$). 
From a perspective of fair comparison this is doable: 
It only makes existing solvers faster (because the workspace is smaller) but does not speed up the proposed algorithm (because the proposed algorithm does not access all configurations in the workspace). 
\item The shortest non-homotopic path planning algorithm proposed in~\cite{Yang2022Efficient} is adopted as the GCP process. 
~\cite{Yang2022Efficient} aimed to find the $k$ collision-free shortest non-homotopic paths towards the goal. 
To collect all configurations, the branch pruning mechanism in~\cite{Yang2022Efficient} is not activated, which means that the node expansion will terminate only when the tree edge crosses its parent edges, or when the maximum tether length constraint is violated. 
Each output curve of~\cite{Yang2022Efficient} is a collision-free locally shortest curve, so symbolically it is $S^\mathcal{C}_\alpha$. 
Then we locally shorten it in $\mathcal{M}_{\rm free}$ so that the resultant curve is the tether state. 
If the tether length is admissible, then we find an admissible configuration as well as its homotopic collision-free locally shortest path. 
\footnote{Since ~\cite{Yang2022Efficient} is still under review, we temporarily publicise its implementation as well as our modification to verify the correctness of the experiments presented in this paper, at: \\
\url{https://github.com/ZJUTongYang/ksnpp}\\
\indent
We highlight that the convexity, a property of the tethered robot workspace, do not need an open-sourced code as a specialised implementation. 
Thus other GCP implementations and UPS implementations are also applicable. }

\end{enumerate}

\subsection{Efficiency of Untethered Path Shortening Process}\label{section_exp_ups}
The effectiveness of the proposed tethered robot planning solutions lies in the assertion that untethered path shortening (UPS) processes are sufficiently efficient to replace the workspace path searching processes. 
Obviously, the computational time of UPS depends on the length of the curve to be shortened. 
In this regard, we exemplify its performance by two case studies, as shown in Fig.~\ref{fig:ups:a} and Fig.~\ref{fig:ups:b}. 

In a simulated $160\times 160$ grid map, shown in Fig.~\ref{fig:ups:a}, the paths to be locally shortened are visualised in red curves, and results are depicted in blue curves. 
The average computational time for carrying out each UPS process is 0.786ms (3.144ms in total). 
In a larger $240\times 240$ grid map, shown in Fig.~\ref{fig:ups:b}, the average computational time is longer, 1.127ms (9.017ms for computing all of the eight UPS processes). 
It is natural that with the untethered paths being longer, the time for UPS also increases. 
However, it is still much more efficient than performing a homotopy-aware path searching algorithm, which is presented in the next subsection. 

\subsection{Comparison on Optimal TP Problem}\label{section_exp_TP}

In this experiment, we report the algorithmic improvement of the workspace convexity property for solving an optimal TP problem. 
The robot's radius is set as $4$ (grids), and the maximum tether length is set as $250$ (grids). 
Two simulated scenarios are utilised for evaluation. 
In the smaller environment, the map size is $160\times 160$ (grids) and four obstacles are presented. 
The base point (at $(60, 18)$), the starting configuration (located at $(28, 143)$), and the goal location (at $(137, 26)$) are shown in Fig.~\ref{fig:TP:a}. 
And in the larger environment, there are $9$ internal obstacles, where the map size is $240\times 240$ (grids). 
The base point (at $(80, 30)$), the starting configuration (located at $(165, 187)$), and the goal location (at $(31, 219)$) are shown in Fig.~\ref{fig:TP:c}. 
For succinctness, we expand our analysis only on the larger map. 

\textbf{Algorithm in~\cite{Kim2014Path}. }
The algorithm proposed in~\cite{Kim2014Path} consists of two parts, pre-calculating the workspace and performing an A*~\cite{hart1968formal} search from the starting configuration until the nearest configuration whose robot stays at the goal location. 
Although there are only $240\times 240 = 57600$ 2D locations in the map, the number of admissible configurations in the workspace is much larger, 396903, therefore pre-calculating the whole workspace is time-consuming (3656ms on average). 
After that, the time for A* running in the workspace is 911ms. 
The total computational time is 4567ms. 

\textbf{Algorithm in~\cite{Kim2014Path} with workspace convexity.}
Taking the convexity of the workspace into consideration, we obtain an improved solver of the optimal tethered planning problem, ``\cite{Kim2014Path}+ours". 
Since GCP is a sub-task of WP, the whole workspace having been pre-calculated means that all admissible goal configurations have been collected. 
Then by workspace convexity, the A* search in the workspace is simplified to several UPS processes: 
Eight admissible goal configurations have been observed in the workspace, shown in Fig.~\ref{fig:ksnpp:b}, so $8$ optimal tethered reconfiguration motions are obtained by running UPS processes for $8$ times, as enumeratively illustrated in Fig.~\ref{fig:ups:b}, whose computational time is 9.017ms in total. 
As a result, the computational time for solving the same optimal tethered path planning problem is reduced to 3665ms. 

\textbf{Algorithm in~\cite{Yang2022Efficient}. }
The GCP process can be implemented by using the algorithm proposed in~\cite{Yang2022Efficient}, which collects all goal configurations without enumerating the workspace. 
The computational time for GCP is 87ms, as shown in Fig.~\ref{fig:ksnpp:b}. 
However, it in its original form is not a tethered path planner without the workspace convexity. 

\textbf{Algorithm in~\cite{Yang2022Efficient} with workspace convexity. }
Because of the workspace convexity, after~\cite{Yang2022Efficient} collects the $8$ admissible goal configurations (87ms), we only need to perform eight UPS processes (9.017ms), visualised in Fig.~\ref{fig:ups:b}. 
Then the optimal TP solution could be safely selected as the minimum length one among the eight optimal reconfiguration motions. 
In summary, the optimal TP problem can be solved in 96ms. 
Relative statistics are collected in Table~\ref{table:optimal_tp}.

\begin{table*}[t]\scriptsize
\centering
\caption{Experimental Results of Solving Optimal TP Problems}\label{table:optimal_tp}
\begin{tabular}{c|c|c|c|c|c|c|c}
\hline
&  \multirow{2}{*}{Tether Length$^1$}& \multirow{2}{*}{Map Size} & \multirow{2}{*}{Num. of Obstacles} & \multirow{2}{*}{Num. of Configurations} &\multicolumn{3}{|c}{Time$^1$}\\
\cline{6-8}
 & & & & & Configuration Collection & Pathfinding & Total\\
\hline
\hline
~\cite{Bhattacharya2012Topological}  &\multirow{8}{*}{250}  & \multirow{4}{*}{$160\times 160$} & \multirow{4}{*}{4} & 134611 & (Whole Workspace) 788ms & (A*) 309ms & 1097ms\\
\cline{1-1}\cline{5-8}
~\cite{Bhattacharya2012Topological}+ours & & & & 134611 & (Whole Workspace) 788ms & (UPS$\times 4$) \textbf{3.144ms} & \textbf{791ms}\\
\cline{1-1}\cline{5-8}
~\cite{Yang2022Efficient} & & & & 4 & (SNPP) 13ms & \multicolumn{2}{c}{Not a Tethered Path Planner}\\
\cline{1-1}\cline{5-8}
~\cite{Yang2022Efficient}+ours & & & & 4 & (SNPP) 13ms & (UPS$\times 4$) \textbf{3.144ms} & \textbf{16ms}\\
\cline{1-1}\cline{3-8}
~\cite{Bhattacharya2012Topological}  & & \multirow{4}{*}{$240\times 240$} & \multirow{4}{*}{9} & 396903 & (Whole Workspace) 3656ms & (A*) 911ms & 4567ms\\
\cline{1-1}\cline{5-8}
~\cite{Bhattacharya2012Topological}+ours & & & & 396903 & (Whole Workspace) 3656ms & (UPS$\times 8$) \textbf{9.017ms} & \textbf{3665ms}\\
\cline{1-1}\cline{5-8}
~\cite{Yang2022Efficient} & & & & 8 & (SNPP) 87ms & \multicolumn{2}{c}{Not a Tethered Path Planner}\\
\cline{1-1}\cline{5-8}
~\cite{Yang2022Efficient}+ours & & & & 8 & (SNPP) 87ms & (UPS$\times 8$) \textbf{9.017ms} & \textbf{96ms}\\
\hline
\end{tabular}
\begin{tablenotes}
\item All results have been averaged over $50$ runs for a fair evaluation. 
\item $^1$ In all testings, the maximum tether length is set invariant as 250 grids. 
This reveals that the variation of the computational cost is purely caused by the increasingly complicated path topologies with the number of obstacles growing. 
\end{tablenotes}
\end{table*}

\begin{table*}[t]\scriptsize
\centering
\caption{Simulated Results of the Optimal TMV Solution Given Different Maximum Tether Length Constraints }\label{tab:different_tether_length}
\setlength\tabcolsep{5pt}
\begin{tabular}{c|c|c|c|c|c|c|c|c|c|c|c}
\hline
\multirow{3}{*}{Task} & \multirow{3}{*}{Tether Length} & \multicolumn{6}{c|}{Number of Admissible Configurations to Visit Each Goal} & \multirow{3}{*}{Number of Reconfig.} & \multicolumn{3}{c}{Computational Time}\\
\cline{3-8}\cline{10-12}
& & (165, 187) & (31, 219) & (136, 139) & (40, 132) & (230, 117) & (158, 10) &  &  \makecell[c]{Configuration\\Collection} & UPS & Total\\
\hline
\hline
\multirow{2}{*}{TMV}& 250 (grids) & 11 & 8 & 15 & 9 & 6 & 6 & 432 & 489ms & 526ms & 1015ms\\
\cline{2-12}
& 200 (grids) & 2 & 0 & 6 & 4 & 2 & 2 & 48 & 164ms & 43ms & 207ms\\
\hline
\multirow{2}{*}{TTSP} & 250 (grids) & 11 & 8 & 15 & 9 & 6 & 6 & 2462 & 489ms & 2997ms & 3486ms\\
\cline{2-12}
& 200 (grids) & 2 & 0 & 6 & 4 & 2 & 2 & 192 & 164ms & 172ms & 336ms\\
\hline
\end{tabular}
\begin{tablenotes}
\item All results have been averaged over $50$ runs for a fair evaluation. 
\end{tablenotes}
\end{table*}

\subsection{Illustration on Optimal TMV Problem}\label{section_exp_TMV}
In this subsection, an optimal tethered $6$-goal visiting task is solved to reveal the algorithmic efficiency by considering the workspace convexity. 
Relevant statistics are collected in Table.~\ref{tab:different_tether_length}. 
Given the same map as above (the $240\times 240$ one where we provide in-depth analysis), the same robot model whose maximum tether length is 250 (grids), and the same base point location as above, let there be a sequence of $6$ goals:
\begin{equation*}
(165, 187), (31, 219), (136, 139), (40, 132), (230, 117), (158, 10) 
\end{equation*}
In the first stage, we again utilise the SNPP method in~\cite{Yang2022Efficient} to collect all admissible goal configurations, visualised in Fig.~\ref{fig:ksnpp:a}$\sim$\ref{fig:ksnpp:f}. 
The algorithm in~\cite{Yang2022Efficient} is again adopted which solves the GCP task in 489ms. 
There are $11, 8, 15, 9, 6, 6$ admissible goal configurations that visit goal $1, \cdots, 6$. 
Then, in the second stage, the number of untethered path shortening processes is $11\times 8 + 8\times 15+\cdots + 6\times 6 = 432$, whose computational time is 526ms. 
Finally, the optimal configuration to visit each goal is selected by a numerical dynamic programming solver whose computational time is negligible. 
The optimal TMV solution is shown in Fig.~\ref{fig:ksnpp:g}$\sim$\ref{fig:ksnpp:l}. 
In summary, the total computational time for solving the optimal tethered $6$-goal visiting task is 1015ms.

The tether length whilst the robot is moving along the optimal tethered motion has also been recorded in Fig.~\ref{fig:ksnpp:m} to better appreciate the convexity of the workspace. 
The points marked by ``$\times$" indicate the configurations when the robot reaches a goal. 
As claimed in \textbf{Corollary~\ref{cor:max_length_point}}, for any optimal reconfiguration motion (i.e., for any interval of the function curve without a ``$\times$" being enclosed), the configuration that has the longest tether must be one of its endpoint configurations. 
In particular, for each pair of consecutive ``$\times$"s, one of them acts as the maximum value of the tether length function in the interim interval. 

The algorithmic efficiency is also evaluated with a different maximum tether length constraint. 
In this case study, the maximum tether length is reduced from $250$ (grids) to $200$ (grids).
As one can expect, there will be a less number of admissible goal configurations, being constructed in a shorter computational time, and some goal locations might become unreachable. 
Relevant statistics are collected in Table~\ref{tab:different_tether_length}. 
The time for GCP computing is reduced to 164ms. 
Results show that the number of possible configurations to visit each goal is reduced to $2, 0, 6, 4, 2, 2$ (the second goal is not reachable).  
Then $2\times 6 + 6\times 4 + \cdots + 2\times 2 = 48$ UPS processes are performed, which takes 43ms. 
Hence the optimal TMV task with maximum tether length constraint $200$ (grids) can be solved in 207ms. 
The animation of the robot motions are visualised in the supplementary video. 

\begin{figure}[t]
\centering
\subfloat[]{
\includegraphics[width = 0.22\textwidth]{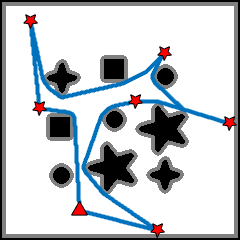}\label{fig:TTSP:a}
}
\subfloat[]{
\includegraphics[width = 0.22\textwidth]{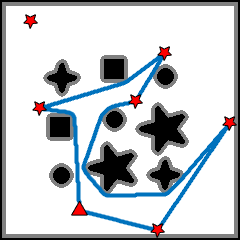}\label{fig:TTSP:b}
}
\caption{Optimal tethered travelling salesman motions. 
(a) The solution when the maximum tether length is 250 (grids). 
(b) The solution when the maximum tether length is 200 (grids). 
In this case, the left-top location $(31, 219)$ is not reachable. 
}\label{fig:TTSP}
\end{figure}

\subsection{Illustration of the Optimal TTSP Solution}\label{section_exp_TTSP}
Given the same environment settings as above, the optimal tethered travelling salesman problem (TTSP) solutions are illustrated in Fig.~\ref{fig:TTSP}. 
Let the maximum tether length constraint be 250 grids. 
The first step, solving GCP, is exactly the same as what we did in the optimal TP experiments, hence is omitted here. 
Then, in the second step, $(11+\cdots + 6)^2 - 11^2-\cdots - 6^2 = 2462$ UPS processes are required, which take 2997ms in total. 
In summary, the proposed solver spends 3486ms in transforming the optimal TTSP task into a numerical GTSP task. 
Testing the algorithm with the maximum allowable tether length being set as 200 grids, the time for solving GCP is 164ms, and only 192 UPS processes are required which take 172ms, which means that the transformation from optimal TTSP to numerical GTSP is finished in 336ms. 
Relevant statistics are collected in Table~\ref{tab:different_tether_length}. 

It can be noticed that the visiting order of goals has changed under different tether length constraint settings. 
When the maximum tether length is 250 (grids), the robot can efficiently visit goal 1($165, 187$), 5($(230, 117)$), and 3($(136, 139)$) in order, as shown in Fig.~\ref{fig:TTSP:a}. 
However, with the maximum tether length being reduced to 200, the configuration that visits goal 5 violates the maximum tether length constraint. 
In this case, it would be more efficient to change the goal visiting order to 1, 3, and finally 5, as shown in Fig.~\ref{fig:TTSP:b}. 
The animation of the TTSP solutions are also illustrated in the supplementary video. 

\section{Conclusion}\label{section_conclusion}

This paper focused on the distance-optimal path planning problems for a tethered omni-directional robot in a 2D environment. 
The tethered robots have been verified to have appealing advantages in maintaining stable energy supply and communication links. 
The state of robot tether imposes complicated topological difficulties, and the maximum cable length constraint makes the planning tasks for tethered mobile robots further non-trivial and non-effectively solved. 

The main contribution of this paper is the proof of the convexity of the tethered robot workspace, as well as a set of distance-optimal tethered path planning algorithms that leverage the workspace convexity. 
It has been proven that the optimal reconfiguration motion between two tether-length-admissible configurations must be the untethered locally shortest collision-free path whose homotopy is specified by the concatenation of the tether curve of the endpoint configurations. 
In this respect, the distance-optimal transition between two configurations can be simply obtained by an untethered path shortening (UPS) process in the 2D environment, instead of a pathfinding process in the workspace. 
Regarding the optimal reconfiguration motion as a building block, high-level distance-optimal tethered planning tasks, such as the classic optimal tethered path planning task, optimal tethered multi-goal visiting task, and the tethered travelling salesman problem, have been significantly simplified. 
In particular, the most time-consuming module in previous algorithms, the workspace pre-calculation process, has been simplified to the goal configuration pre-calculation process which could be effectively executed. 
Extensive comparisons and case studies have been provided to show the merit of the proposed theory and prove the validity of the proposed algorithms in producing highly effective optimal tethered robot motions, supplemented by a detailed video. 


\bibliographystyle{ieeetr} 
\bibliography{TRO21_optimalTMV_v2}

\begin{thebibliography}{10}

\bibitem{Shnaps2014Online}
I.~Shnaps and E.~Rimon, ``Online coverage by a tethered autonomous mobile robot
  in planar unknown environments,'' {\em IEEE Transactions on Robotics},
  vol.~30, no.~4, pp.~966--974, 2014.

\bibitem{Mechsy2017Novel}
L.~Mechsy, M.~Dias, W.~Pragithmukar, and A.~Kulasekera, ``A novel offline
  coverage path planning algorithm for a tethered robot,'' in {\em Proceedings
  of the 2017 International Conference on Control, Automation and Systems
  (ICCAS)}, pp.~218--223, IEEE, 2017.

\bibitem{Sharma20192}
G.~Sharma, P.~Poudel, A.~Dutta, V.~Zeinali, T.~T. Khoei, and J.-H. Kim, ``A
  2-approximation algorithm for the online tethered coverage problem.,'' in
  {\em Robotics: Science and Systems}, 2019.

\bibitem{Pratt2008Use}
K.~S. Pratt, R.~R. Murphy, J.~L. Burke, J.~Craighead, C.~Griffin, and
  S.~Stover, ``Use of tethered small unmanned aerial system at berkman plaza ii
  collapse,'' in {\em Proceedings of the 2008 IEEE International Workshop on
  Safety, Security and Rescue Robotics}, pp.~134--139, IEEE, 2008.

\bibitem{Abad2011Motion}
P.~Abad-Manterola, I.~A. Nesnas, and J.~W. Burdick, ``Motion planning on steep
  terrain for the tethered axel rover,'' in {\em Proceedings of the 2011 IEEE
  International Conference on Robotics and Automation (ICRA)}, pp.~4188--4195,
  IEEE, 2011.

\bibitem{Tanner2013Online}
M.~M. Tanner, J.~W. Burdick, and I.~A. Nesnas, ``Online motion planning for
  tethered robots in extreme terrain,'' in {\em Proceedings of the 2013 IEEE
  International Conference on Robotics and Automation (ICRA)}, pp.~5557--5564,
  IEEE, 2013.

\bibitem{Shapovalov2020Exploration}
D.~Shapovalov and G.~A. Pereira, ``Exploration of unknown environments with a
  tethered mobile robot,'' in {\em Proceedings of the 2020 IEEE/RSJ
  International Conference on Intelligent Robots and Systems (IROS)},
  pp.~6826--6831, IEEE, 2020.

\bibitem{Bhattacharya2012Topological}
S.~Bhattacharya, M.~Likhachev, and V.~Kumar, ``Topological constraints in
  search-based robot path planning,'' {\em Autonomous Robots}, vol.~33, no.~3,
  pp.~273--290, 2012.

\bibitem{Pokorny2016High}
F.~T. Pokorny, D.~Kragic, L.~E. Kavraki, and K.~Goldberg, ``High-dimensional
  winding-augmented motion planning with 2d topological task projections and
  persistent homology,'' in {\em Proceedings of the 2016 IEEE International
  Conference on Robotics and Automation (ICRA)}, pp.~24--31, IEEE, 2016.

\bibitem{hart1968formal}
P.~E. Hart, N.~J. Nilsson, and B.~Raphael, ``A formal basis for the heuristic
  determination of minimum cost paths,'' {\em IEEE transactions on Systems
  Science and Cybernetics}, vol.~4, no.~2, pp.~100--107, 1968.

\bibitem{Kim2014Path}
S.~Kim, S.~Bhattacharya, and V.~Kumar, ``Path planning for a tethered mobile
  robot,'' in {\em Proceedings of the 2014 IEEE International Conference on
  Robotics and Automation (ICRA)}, pp.~1132--1139, IEEE, 2014.

\bibitem{Mccammon2017Planning}
S.~McCammon and G.~A. Hollinger, ``Planning and executing optimal
  non-entangling paths for tethered underwater vehicles,'' in {\em Proceedings
  of the 2017 IEEE International Conference on Robotics and Automation (ICRA)},
  pp.~3040--3046, IEEE, 2017.

\bibitem{Kim2015Path}
S.~Kim and M.~Likhachev, ``Path planning for a tethered robot using
  multi-heuristic a* with topology-based heuristics,'' in {\em Proceedings of
  the 2015 IEEE/RSJ International Conference on Intelligent Robots and Systems
  (IROS)}, pp.~4656--4663, IEEE, 2015.

\bibitem{Lien1993Transformation}
Y.-N. Lien, E.~Ma, and B.~W.-S. Wah, ``Transformation of the generalized
  traveling-salesman problem into the standard traveling-salesman problem,''
  {\em Information Sciences}, vol.~74, no.~1-2, pp.~177--189, 1993.

\bibitem{Hert1995Moving}
S.~Hert and V.~Lumelsky, ``Moving multiple tethered robots between arbitrary
  configurations,'' in {\em Proceedings of the 1995 IEEE/RSJ International
  Conference on Intelligent Robots and Systems (IROS)}, vol.~2, pp.~280--285,
  IEEE, 1995.

\bibitem{Hert1996Ties}
S.~Hert and V.~Lumelsky, ``The ties that bind: Motion planning for multiple
  tethered robots,'' {\em Robotics and autonomous systems}, vol.~17, no.~3,
  pp.~187--215, 1996.

\bibitem{Hert1999Motion}
S.~Hert and V.~Lumelsky, ``Motion planning in r/sup 3/for multiple tethered
  robots,'' {\em IEEE transactions on robotics and automation}, vol.~15, no.~4,
  pp.~623--639, 1999.

\bibitem{Teshnizi2014Computing}
R.~H. Teshnizi and D.~A. Shell, ``Computing cell-based decompositions
  dynamically for planning motions of tethered robots,'' in {\em Proceedings of
  the 2014 IEEE International Conference on Robotics and Automation (ICRA)},
  pp.~6130--6135, IEEE, 2014.

\bibitem{Xavier1999Shortest}
P.~G. Xavier, ``Shortest path planning for a tethered robot or an anchored
  cable,'' in {\em Proceedings of the 1999 IEEE International Conference on
  Robotics and Automation (ICRA)}, vol.~2, pp.~1011--1017, IEEE, 1999.

\bibitem{Brass2015Shortest}
P.~Brass, I.~Vigan, and N.~Xu, ``Shortest path planning for a tethered robot,''
  {\em Computational Geometry}, vol.~48, no.~9, pp.~732--742, 2015.

\bibitem{Salzman2015Optimal}
O.~Salzman and D.~Halperin, ``Optimal motion planning for a tethered robot:
  Efficient preprocessing for fast shortest paths queries,'' in {\em
  Proceedings of the 2015 IEEE International Conference on Robotics and
  Automation (ICRA)}, pp.~4161--4166, IEEE, 2015.

\bibitem{Ghosh1991Output}
S.~K. Ghosh and D.~M. Mount, ``An output-sensitive algorithm for computing
  visibility graphs,'' {\em SIAM Journal on Computing}, vol.~20, no.~5,
  pp.~888--910, 1991.

\bibitem{Igarashi2010Homotopic}
T.~Igarashi and M.~Stilman, ``Homotopic path planning on manifolds for cabled
  mobile robots,'' in {\em Algorithmic Foundations of Robotics IX}, pp.~1--18,
  Springer, 2010.

\bibitem{Yershov2013Continuous}
D.~S. Yershov, P.~Vernaza, and S.~M. LaValle, ``Continuous planning with
  winding constraints using optimal heuristic-driven front propagation,'' in
  {\em Proceedings of the 2013 IEEE International Conference on Robotics and
  Automation (ICRA)}, pp.~5551--5556, IEEE, 2013.

\bibitem{Bhattacharya2010Search}
S.~Bhattacharya, ``Search-based path planning with homotopy class
  constraints,'' in {\em Proceedings of the 2010 AAAI conference on artificial
  intelligence}, vol.~24, pp.~1230--1237, 2010.

\bibitem{Pokorny2015Data}
F.~T. Pokorny and D.~Kragic, ``Data-driven topological motion planning with
  persistent cohomology.,'' in {\em Robotics: Science and Systems}, 2015.

\bibitem{Dijkstra1959Note}
E.~W. Dijkstra, ``A note on two problems in connexion with graphs,'' {\em
  Numerische mathematik}, vol.~1, no.~1, pp.~269--271, 1959.

\bibitem{Wang2018Topological}
X.~Wang and S.~Bhattacharya, ``A topological approach to workspace and motion
  planning for a cable-controlled robot in cluttered environments,'' {\em IEEE
  Robotics and Automation Letters}, vol.~3, no.~3, pp.~2600--2607, 2018.

\bibitem{Rotman2013Introduction}
J.~J. Rotman, {\em An introduction to algebraic topology}, vol.~119.
\newblock Springer Science \& Business Media, 2013.

\bibitem{Kavraki1996Probabilistic}
L.~E. Kavraki, P.~Svestka, J.-C. Latombe, and M.~H. Overmars, ``Probabilistic
  roadmaps for path planning in high-dimensional configuration spaces,'' {\em
  IEEE transactions on Robotics and Automation}, vol.~12, no.~4, pp.~566--580,
  1996.

\bibitem{Bresenham1965Algorithm}
J.~E. Bresenham, ``Algorithm for computer control of a digital plotter,'' {\em
  IBM Systems journal}, vol.~4, no.~1, pp.~25--30, 1965.

\bibitem{Yang2022Efficient}
T.~Yang, L.~Huang, Y.~Wang, and R.~Xiong, ``Efficient search of the k shortest
  non-homotopic paths by eliminating non-k-optimal topologies,'' {\em arXiv
  preprint arXiv:2207.13604}, 2022.

\end{thebibliography}

\newpage

\vfill

\end{document}